\documentclass[twoside,11pt]{article}
%

%
%
%

\usepackage[abbrvbib, preprint]{jmlr2e}

\usepackage{subfig}


\newtheorem{assumption}{Assumption}

\usepackage{amsmath}
\usepackage{lastpage}


\ShortHeadings{Randomly Projected Convex Clustering Model}{Wang, Yuan, Ma, Zeng, and Sun}
\firstpageno{1}

\begin{document}

\title{Randomly Projected Convex Clustering Model: Motivation, Realization, and Cluster Recovery Guarantees}

\author{\name Ziwen Wang \email zwwang@math.cuhk.edu.hk \\
       \addr Department of Mathematics\\
       The Chinese University of Hong Kong\\
       Hong Kong
       \AND
       \name Yancheng Yuan\thanks{Corresponding author.} \email yancheng.yuan@polyu.edu.hk \\
       \addr Department of Applied Mathematics\\
       The Hong Kong Polytechnic University\\
       Hong Kong
       \AND
       \name Jiaming Ma \email 22051002r@connect.polyu.hk \\
       \addr Department of Applied Mathematics\\
       The Hong Kong Polytechnic University\\
       Hong Kong
       \AND
       \name Tieyong Zeng \email zeng@math.cuhk.edu.hk \\
       \addr Department of Mathematics\\
       The Chinese University of Hong Kong\\
       Hong Kong
       \AND
       \name Defeng Sun \email defeng.sun@polyu.edu.hk \\
       \addr Department of Applied Mathematics\\
       The Hong Kong Polytechnic University\\
       Hong Kong
       }

\editor{}

\maketitle

\begin{abstract}
In this paper, we propose a randomly projected convex clustering model for clustering a collection of $n$ high dimensional data points in $\mathbb{R}^d$ with $K$ hidden clusters. Compared to the convex clustering model for clustering original data with dimension $d$, we prove that, under some mild conditions, the perfect recovery of the cluster membership assignments of the convex clustering model, if exists, can be preserved by the randomly projected convex clustering model with embedding dimension $m = O(\epsilon^{-2}\log(n))$, where $0 < \epsilon < 1$ is some given parameter. We further prove that the embedding dimension can be improved to be $O(\epsilon^{-2}\log(K))$, which is independent of the number of data points. Extensive numerical experiment results will be presented in this paper to demonstrate the robustness and superior performance of the randomly projected convex clustering model. The numerical results presented in this paper also demonstrate that the randomly projected convex clustering model can outperform the randomly projected K-means model in practice.
\end{abstract}

\begin{keywords}
  convex clustering, Johnson-Lindenstrauss lemma, unsupervised learning.
\end{keywords}

\section{Introduction}
Clustering is a fundamental and important problem in data science. Among many others, K-means is arguably the most popular model. It has been widely known that K-means may suffer from the nonconvexity of the model and is very sensitive to the initialization. More critically, K-means requires the number of clusters as a prior, which is not practical in many applications. Recently, researchers have proposed the convex clustering model, which aims to overcome the aforementioned challenges \citep{pelckmans2005convex,hocking2011clusterpath,lindsten2011clustering}.

Given a collection of $n$ data points with $d$ features $A=\{\mathbf{a}_1, \mathbf{a}_2, \ldots, \mathbf{a}_n\} \subseteq \mathbb{R}^{d}$, the general weighted convex clustering model (CCM) solves the following convex optimization problem
\begin{equation}
\tag{CCM}
\label{model: WSON}
\min _{x_1, \dots, x_n \in \mathbb{R}^{d}} ~ \frac{1}{2} \sum_{i=1}^n\left\|\mathbf{x}_i-\mathbf{a}_i\right\|^2+\gamma \sum_{i<j} w_{i j}\left\|\mathbf{x}_i-\mathbf{x}_j\right\|_q,
\end{equation}
where $w_{i j}=w_{j i} \geq 0$ are given weights depending on the input data $A$, $\gamma>0$ is a tuning parameter which controls the strength of the fusion penalty, and $\|\cdot\|_q$ is the vector $q$-norm ($q \geq 1$). In this paper, we focus on the convex clustering model with $q = 2$. We denote $\|\cdot\|$ as the vector $2$-norm. One choice of the weights is setting $w_{ij} = 1$ for all $1 \leq i < j \leq n$, and the resulting model is usually called the convex clustering model with uniform weights. In practice, the following k-nearest neighbors-based weights are popular due to their robustness and computational efficiency:
\begin{equation}
\label{eq: gaussian-weights}
w_{ij} = \left\{
\begin{array}{ll}
 \exp(-\phi\|\mathbf{a}_i - \mathbf{a}_j\|^2), &  \mbox{if $(i, j) \in \mathcal{E}(k)$},\\
  0,   & \mbox{otherwise},
\end{array}
\right.
\end{equation}
here, $\mathcal{E}(k) := \{(i, j) ~\mid~ \mbox{if $\mathbf{a}_i$ (or $\mathbf{a}_j$) is in $\mathbf{a}_j$'s (or $\mathbf{a}_i$'s) k-nearest neighbors}, 1 \leq i \neq j \leq n \}$.

Extensive investigation has been conducted for the convex clustering model in recent years and impressive progress has been achieved from the perspectives of both the recovery properties and efficient numerical algorithms. From the theoretical understanding perspective, some deterministic and statistical cluster recovery guarantees have been established \citep{zhu2014convex, tan2015statistical, panahi2017clustering,Radchenko2017,chiquet2017fast,chi2019recovering,sun2021convex,conve-coclustering,jiang2020recovery,dunlap2022local}. More specifically, under some mild conditions, there exists a nonempty interval of the tuning parameter $\gamma$ such that the convex clustering model can perfectly recover the cluster membership of the data \citep{panahi2017clustering,sun2021convex}. 
From the perspective of optimization algorithms, impressive progress has been achieved in solving the convex clustering model with a large number of data points but with moderate feature dimensions (say with $d \leq 100$ in (\ref{model: WSON})). Along this direction, \citet{chi2015splitting} adopted the alternating direction method of multipliers (ADMM) and proposed an alternating minimization algorithm (AMA). Later, \citet{yuan2018efficient} designed a semismooth Newton based augmented Lagrangian (SSNAL) method that can solve the convex clustering model efficiently with high accuracy. More recently, by taking the advantage of the structured sparsity of the convex clustering model, \citet{yuan2021dimension} proposed dimension reduction techniques (in the sense of the number of data points) called adaptive sieving (AS) and enhanced adaptive sieving (EAS), which further accelerate SSNAL (and other algorithms). Consequently, the existing algorithms can be scalable with respect to the number of data points. However, it is still very challenging to solve the convex clustering model when the dimension of the data features is high (i.e. $d$ is large in (\ref{model: WSON})).

In this paper, we will design a dimension reduction technique for overcoming the computational challenges of the convex clustering model for clustering high dimensional data. Our approach is inspired by the Johnson-Lindenstrauss (JL) lemma \citep{JLLemma84} and the fact that the recovery guarantees of the convex clustering model mainly depend on the pair-wise distances among the data points and centroids. In particular, we will propose a randomly projected (weighted) convex clustering model which clusters the data with a much smaller dimension obtained by applying a random projection mapping to the input data. Among other advantages,
we want to mention that random projection is a computationally efficient approach to obtaining the embedding data.  Importantly, we will prove that the randomly projected convex clustering model will preserve the recovery guarantees of the original convex clustering model. In other words, if there exists a nonempty interval of the parameter $\gamma$ such that the convex clustering model (\ref{model: WSON}) perfectly recovers the cluster memberships of the input data, so will be the randomly projected model in high probability. This is a very interesting and inspiring result since we can obtain the clustering results of the original high dimensional data by solving a more tractable randomly projected convex clustering model with much smaller dimensions. Moreover, we will establish the cluster recovery guarantees for the randomly projected convex clustering model where the embedding dimension can be independent of the number of data points. Extensive numerical experiment results will be presented in this paper to justify the theoretical guarantees and to demonstrate the superior performance and robustness of the proposed model. To further demonstrate the superior performance of the randomly projected convex clustering model, we also compare its performance to the randomly projected K-means model \citep{JL-Kmeans15,k-means-improved}.

We summarize the main contributions of this paper as follows:
\begin{enumerate}
    \item We propose a randomly projected convex clustering model which is much more computationally tractable than the convex clustering model (\ref{model: WSON}).

    \item We establish the recovery guarantees of the randomly projected convex clustering model under mild conditions. We further prove that the embedding dimension can be independent of the number of data points.

    \item We conduct extensive numerical experiments to justify the established theoretical guarantees and demonstrate the superior performance of the proposed randomly projected convex clustering model.
\end{enumerate}

The rest of the paper is organized as follows: In Section \ref{sec: preliminary}, we introduce some concepts and notation and then review some necessary preliminary results of the recovery guarantees of the convex clustering model and the JL lemma. In Section \ref{sec: PCCM}, we will propose a randomly projected convex clustering model and prove its theoretical recovery guarantees. We will then present the numerical results in Section \ref{sec: numerical results}. We will conclude the paper and include some discussion of future research directions in Section \ref{sec: conclusion}.

\section{Preliminaries}
\label{sec: preliminary}
In this section, we first introduce some commonly used notation and then introduce some results about the convex clustering model and the Johnson-Lindenstrauss lemma.

\subsection{Problem Settings}
In this paper, we focus on the following problem setting.

\medskip
\noindent\textbf{General problem setting}: Cluster a collection of $n$ given data points $A=\left\{\mathbf{a}_1, \ldots, \mathbf{a}_n\right\} \subseteq \mathbb{R}^d$ with a hidden clustering partition $\mathcal{V}=$ $\left\{V_1, V_2, \ldots, V_K\right\}$.

\medskip
We define some notation in Table \ref{tab: notation}, which will be commonly used later in this paper.

\begin{table}[tbhp]
\caption{Some commonly used notation. In this table, we assume by default that $1 \leq \alpha \neq \beta \leq K$.}
\label{tab: notation}
{\footnotesize
\begin{center}
\begin{tabular}{c|c}
\hline
 Notation & Definition
\\\hline
$I_{\alpha}$ & $\left\{i ~\mid~ \mathbf{a}_i \in V_\alpha\right\}$ \\
\hline
$n_{\alpha}$ & cardinality of $I_{\alpha}$\\
\hline
$[m]$ for a given integer $m >0$ & $[m] := \left\{1, 2, \dots, m\right\}$ \\
\hline
$\mathbf{a}^{(\alpha)}$ & $\frac{1}{n_\alpha} \sum_{i \in I_\alpha} \mathbf{a}_i$\\
\hline
$\mathbf{a}^{(0)}$ & $\frac{1}{n}\sum_{i=1}^n \mathbf{a}_i$ \\
\hline
$w^{(\alpha, \beta)}$ & $\sum_{i \in I_\alpha} \sum_{j \in I_\beta} w_{i j}$ \\
\hline
$\bar{w}^{(\beta)}$ & $\frac{1}{n_\beta} \sum_{1 \leq l \leq K, l \neq \beta} w^{(\beta, l)}$ \\
\hline
$w_i^{(\beta)}$ ~ $(i \in [n])$ & $\sum_{j \in I_\beta} w_{i j}$\\
\hline
$\mu_{i j}^{(\alpha)}$ ~ $(i,j \in I_{\alpha})$ & $\sum_{\beta=1, \beta \neq \alpha}^K\left|w_i^{(\beta)}-w_j^{(\beta)}\right|$ \\
\hline
$C(n, k) ~ (1 \leq k \leq n)$ & $\frac{n!}{k!(n - k)!}$\\
\hline
\end{tabular}
\end{center}
}
\end{table}
Following the settings in \citep{sun2021convex}, we assume the following assumptions hold throughout this paper.

\begin{assumption}
\label{ass: unique-centroid}
In the general problem setting, the mean vector $\mathbf{a}^{(0)}$ and the centroids $\mathbf{a}^{(1)}, \dots, \mathbf{a}^{(K)}$ are all distinct.
\end{assumption}

\begin{assumption}
\label{ass: weight-assumption}
The specified weights $w_{ij}$ in the model (\ref{model: WSON}) satisfy
\begin{equation}
\label{eq: weights-assump}
\mbox{$w_{ij} > 0$ ~ and ~ $n_{\alpha}w_{ij} > \mu^{(\alpha)}_{ij}$}, \quad \forall i, j \in I_{\alpha}, 1 \leq \alpha \leq K.
\end{equation}
\end{assumption}
A quick comment is that Assumption \ref{ass: weight-assumption} holds automatically for uniform weights. The next definition will be useful for the discussion of the convex clustering model.

\begin{definition}
We say that a map $\psi: \mathbb{R}^d \rightarrow \mathbb{R}^{\bar{d}}$ perfectly recovers $\mathcal{V}$ on the data $A$ if $\psi\left(\mathbf{a}_i\right)=\psi\left(\mathbf{a}_j\right)$ is equivalent to $\mathbf{a}_i$ and $\mathbf{a}_j$ belonging to the same $V_{\alpha}$ for some $1 \leq \alpha \leq K$. We call a partition $\mathcal{W}=\left\{W_1, \ldots, W_L\right\}$ of $A$ a coarsening of $\mathcal{V}$ if there exists a partition $\{\alpha_1, \dots, \alpha_L\}$ of $[K]$ such that $W_l = \bigcup_{i \in \alpha_l} V_i$ for all $1 \leq l \leq L$. We call $\mathcal{W}$ a non-trivial coarsening of $\mathcal{V}$ if $L > 1$.
\end{definition}

\subsection{Recovery guarantees for convex clustering model (\ref{model: WSON})}
In this section, we review the recovery guarantees of the weighted convex clustering model.
\begin{theorem}[\rm{\citep[Theorem 5]{sun2021convex}}]
\label{thm: WSON}
In the general problem setting, denote the optimal solution of the convex clustering model (\ref{model: WSON}) with some given parameter $\gamma \geq 0$ by $\left\{\mathbf{x}_i^*(\gamma)\right\}_{i = 1}^n$ and define the map $\phi_{\gamma}\left(\mathbf{a}_i\right)=\mathbf{x}_i^*(\gamma)$ for $i=1, \ldots, n$.
Define
\begin{equation}
\footnotesize
\label{equ: gamma min &max}
\begin{array}{c}
\gamma_{\min } := \max _{1 \leq \alpha \leq K} \max _{i, j \in I_\alpha}\left\{\frac{\left\|\mathbf{a}_i-\mathbf{a}_j\right\|}{n_\alpha w_{i j}-\mu_{i j}^{(\alpha)}}\right\}, \;
\gamma_{\max } := \min _{1 \leq \alpha<\beta \leq K}\left\{\frac{\left\|\mathbf{a}^{(\alpha)}-\mathbf{a}^{(\beta)}\right\|}
{\bar{w}^{(\alpha)}+\bar{w}^{(\beta)}}\right\},\\
\gamma_{\max2} := \max _{1 \leq \alpha \leq K} \frac{\left\|\bar{\mathbf{a}}-\mathbf{a}^{(\alpha)}\right\|}{\bar{w}^{(\alpha)}}, \; r := \frac{\gamma_{\max}}{\gamma_{\min}}, \; r_2 := \frac{\gamma_{\max2}}{\gamma_{\min}}.
\end{array}
\end{equation}
Under Assumption \ref{ass: unique-centroid} and Assumption \ref{ass: weight-assumption}, we have
\begin{enumerate}
    \item If $r > 1$ and $\gamma \in\left[\gamma_{\min }, \gamma_{\max }\right)$, then the map $\phi_{\gamma}$ perfectly recovers $\mathcal{V}$.
\item If $r_2 > 1$ and $\gamma \in [\gamma_{\min }, \gamma_{\max2})$, then the map $\phi_{\gamma}$ recovers a non-trivial coarsening of $\mathcal{V}$.
\end{enumerate}
\end{theorem}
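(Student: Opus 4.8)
The plan is to argue entirely from the first-order optimality conditions. Since the fidelity term $\frac{1}{2}\sum_i\|\mathbf{x}_i-\mathbf{a}_i\|^2$ is $1$-strongly convex and the fusion term is convex, (\ref{model: WSON}) has a unique minimizer $\{\mathbf{x}_i^*(\gamma)\}$, and this minimizer is characterized by the KKT conditions: there is an antisymmetric family $\lambda_{ij}^*\in\mathbb{R}^d$ (so $\lambda_{ij}^*=-\lambda_{ji}^*$) with $\|\lambda_{ij}^*\|\le\gamma w_{ij}$, with $\lambda_{ij}^*=\gamma w_{ij}(\mathbf{x}_i^*-\mathbf{x}_j^*)/\|\mathbf{x}_i^*-\mathbf{x}_j^*\|$ whenever $\mathbf{x}_i^*\ne\mathbf{x}_j^*$, and $\mathbf{x}_i^*=\mathbf{a}_i-\sum_{j\ne i}\lambda_{ij}^*$ for all $i$. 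Because the minimizer is unique, it suffices to \emph{construct} one primal--dual pair satisfying these conditions whose primal part is constant exactly on the hidden clusters; convexity then forces it to be the solution, and the recovery claims follow by inspecting where its constant values agree.

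\textbf{Step 1 (reduction to $K$ points).} First posit $\mathbf{x}_i^*=\mathbf{v}_\alpha$ for all $i\in I_\alpha$. Summing the stationarity equations over $i\in I_\alpha$, the within-cluster multipliers cancel by antisymmetry and, writing $\eta_{\alpha\beta}:=\sum_{i\in I_\alpha,\,j\in I_\beta}\lambda_{ij}^*$, one is left with $n_\alpha(\mathbf{v}_\alpha-\mathbf{a}^{(\alpha)})+\sum_{\beta\ne\alpha}\eta_{\alpha\beta}=0$ and $\|\eta_{\alpha\beta}\|\le\gamma w^{(\alpha,\beta)}$, which is precisely the KKT system of the \emph{reduced} weighted convex clustering problem
\begin{equation*}
\min_{\mathbf{v}_1,\dots,\mathbf{v}_K\in\mathbb{R}^d}~\frac{1}{2}\sum_{\alpha=1}^{K}n_\alpha\|\mathbf{v}_\alpha-\mathbf{a}^{(\alpha)}\|^2+\gamma\sum_{\alpha<\beta}w^{(\alpha,\beta)}\|\mathbf{v}_\alpha-\mathbf{v}_\beta\|.
\end{equation*}
Let $\{\mathbf{v}_\alpha\}$ be its unique minimizer with multipliers $\{\eta_{\alpha\beta}\}$; this pins down the cross-cluster part of the dual.

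\textbf{Step 2 (no cluster splits, via $\gamma\ge\gamma_{\min}$).} Set the cross-cluster multipliers to $\lambda_{ij}^*=\frac{w_{ij}}{w^{(\alpha,\beta)}}\eta_{\alpha\beta}$ for $i\in I_\alpha,\,j\in I_\beta$; this respects $\|\lambda_{ij}^*\|\le\gamma w_{ij}$ and, when $\mathbf{v}_\alpha\ne\mathbf{v}_\beta$, is correctly aligned with $\mathbf{v}_\alpha-\mathbf{v}_\beta$ by the reduced KKT system. What remains is to choose the within-cluster multipliers so that $\sum_{j\in I_\alpha,\,j\ne i}\lambda_{ij}^*=\mathbf{b}_i$ for each $i\in I_\alpha$, where $\mathbf{b}_i:=\mathbf{a}_i-\mathbf{v}_\alpha-\sum_{\beta\ne\alpha}\frac{w_i^{(\beta)}}{w^{(\alpha,\beta)}}\eta_{\alpha\beta}$; one checks $\sum_{i\in I_\alpha}\mathbf{b}_i=0$ using the reduced stationarity, so the explicit antisymmetric choice $\lambda_{ij}^*=\frac{1}{n_\alpha}(\mathbf{b}_i-\mathbf{b}_j)$ realizes all of these equations. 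Since $\mathbf{b}_i-\mathbf{b}_j=(\mathbf{a}_i-\mathbf{a}_j)-\sum_{\beta\ne\alpha}\frac{w_i^{(\beta)}-w_j^{(\beta)}}{w^{(\alpha,\beta)}}\eta_{\alpha\beta}$, the triangle inequality gives $\|\lambda_{ij}^*\|\le\frac{1}{n_\alpha}(\|\mathbf{a}_i-\mathbf{a}_j\|+\gamma\mu_{ij}^{(\alpha)})$, and thus the capacity bound $\|\lambda_{ij}^*\|\le\gamma w_{ij}$ holds precisely when $\|\mathbf{a}_i-\mathbf{a}_j\|\le\gamma(n_\alpha w_{ij}-\mu_{ij}^{(\alpha)})$ for all $i,j\in I_\alpha$ and all $\alpha$, i.e. when $\gamma\ge\gamma_{\min}$ (Assumption \ref{ass: weight-assumption} makes the denominators positive, so $\gamma_{\min}$ is well defined). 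I expect this step --- isolating each node's cross-cluster residual $\mathbf{b}_i$ and tracking how the $\eta_{\alpha\beta}$ feed into it --- to be the main technical obstacle; the rest is routine. Together with Step 1, this shows the unique solution of (\ref{model: WSON}) is constant on each $I_\alpha$, so $\phi_\gamma$ induces a coarsening of $\mathcal{V}$.

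\textbf{Step 3 (clusters stay apart: $\gamma<\gamma_{\max}$, and $\gamma<\gamma_{\max2}$ for Part 2).} From $n_\alpha(\mathbf{v}_\alpha-\mathbf{a}^{(\alpha)})=-\sum_{\beta\ne\alpha}\eta_{\alpha\beta}$ and $\|\eta_{\alpha\beta}\|\le\gamma w^{(\alpha,\beta)}$ one gets the a priori bound $\|\mathbf{v}_\alpha-\mathbf{a}^{(\alpha)}\|\le\gamma\bar{w}^{(\alpha)}$. If $\mathbf{v}_\alpha=\mathbf{v}_\beta$ for some $\alpha\ne\beta$, the triangle inequality yields $\|\mathbf{a}^{(\alpha)}-\mathbf{a}^{(\beta)}\|\le\gamma(\bar{w}^{(\alpha)}+\bar{w}^{(\beta)})$, contradicting $\gamma<\gamma_{\max}$; hence all $\mathbf{v}_\alpha$ are distinct and $\mathbf{x}_i^*=\mathbf{x}_j^*$ holds iff $i,j$ lie in the same cluster, which is Part 1. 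For Part 2, summing the reduced stationarity over all $\alpha$ (again the $\eta_{\alpha\beta}$ cancel) shows that if all $\mathbf{v}_\alpha$ collapsed to one point, it would have to equal the grand mean $\bar{\mathbf{a}}=\mathbf{a}^{(0)}$ and satisfy $\|\bar{\mathbf{a}}-\mathbf{a}^{(\alpha)}\|\le\gamma\bar{w}^{(\alpha)}$ for every $\alpha$; choosing the $\alpha$ attaining the maximum in $\gamma_{\max2}$ contradicts $\gamma<\gamma_{\max2}$. So at least two of the $\mathbf{v}_\alpha$ differ while, by Step 2, no $I_\alpha$ splits, i.e. $\phi_\gamma$ recovers a non-trivial coarsening of $\mathcal{V}$. (Assumption \ref{ass: unique-centroid} ensures $\gamma_{\max},\gamma_{\max2}>0$, so the stated $\gamma$-intervals are non-empty when $r>1$, resp. $r_2>1$.)
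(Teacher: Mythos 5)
Your proposal is correct, and it reconstructs essentially the argument behind the cited result: this paper states Theorem \ref{thm: WSON} without proof, quoting \citet[Theorem 5]{sun2021convex}, whose proof (following \citet{panahi2017clustering}) uses exactly your strategy of reducing to the centroid problem, distributing the cross-cluster multipliers proportionally to $w_{ij}/w^{(\alpha,\beta)}$, and certifying the within-cluster multipliers $\lambda_{ij}^*=\frac{1}{n_\alpha}(\mathbf{b}_i-\mathbf{b}_j)$ via the bound $\|\mathbf{a}_i-\mathbf{a}_j\|+\gamma\mu_{ij}^{(\alpha)}\le\gamma n_\alpha w_{ij}$. The only cosmetic point is to set $\lambda_{ij}^*=0$ when $w^{(\alpha,\beta)}=0$ (Assumption \ref{ass: weight-assumption} only forces within-cluster weights to be positive), which does not affect the argument.
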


\subsection{Johnson-Lindenstrauss Lemma and the Random Projection}
In this section, we introduce the Johnson-Lindenstrauss (JL) lemma, which is a key tool for this paper. Consider a collection of high-dimensional data points $X = \left\{\mathbf{x}_1, \dots, \mathbf{x}_n\right\} \subseteq \mathbb{R}^d$, the JL lemma shows the existence of a mapping $f: X \rightarrow \mathbb{R}^{m}$ such that for all points $\mathbf{x}_i \neq \mathbf{x}_j \in X$, $\|\mathbf{x}_i - \mathbf{x}_j\|^2$ are approximately maintained in a $m$ dimensional space within a distortion tolerance $\epsilon\in(0,1)$. More surprisingly, the required embedding dimension $m=O(\epsilon^{-2}\log(n))$ is independent of $d$.
\begin{theorem}[JL lemma \rm{\cite[Lemma 1]{JLLemma84}}]
\label{thm: JL}
For any given collection of $n$ data points $X = \left\{\mathbf{x}_1, \dots, \mathbf{x}_n \right\} \subseteq \mathbb{R}^d$ and any $\epsilon \in(0,1)$, there exists an $\epsilon$- isometry embedding $f: \mathbb{R}^d \rightarrow \mathbb{R}^m$ with $m=O\left(\min\{d, \epsilon^{-2} \log n \}\right)$. In other words, $\forall ~ \mathbf{x}_i, \mathbf{x}_j \in X$,
\begin{equation}\label{equ: JLT}
(1 - \epsilon)\|\mathbf{x}_i - \mathbf{x}_j\|^2 \leq \|f(\mathbf{x}_i)-f(\mathbf{x}_j)\|^2 \leq (1 + \epsilon)\|\mathbf{x}_i - \mathbf{x}_j\|^2.
\end{equation}
\end{theorem}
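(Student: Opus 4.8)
The plan is to establish the JL lemma by the probabilistic method: I will exhibit a \emph{random} linear map and show it is an $\epsilon$-isometry on $X$ with strictly positive probability, which forces the existence of a deterministic one. First I would reduce the claim to a single-vector concentration statement. Since the map $f$ will be linear, $\|f(\mathbf{x}_i) - f(\mathbf{x}_j)\|^2 = \|f(\mathbf{x}_i - \mathbf{x}_j)\|^2$, so it suffices to control $\|f(v)\|^2$ for each of the at most $\binom{n}{2}$ difference vectors $v = \mathbf{x}_i - \mathbf{x}_j$; by homogeneity of a linear map it is enough to fix a unit vector $u \in \mathbb{R}^d$ and bound the probability that $\big|\,\|f(u)\|^2 - 1\,\big| > \epsilon$.

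Second, I would use the concrete construction $f(x) = \frac{1}{\sqrt{m}}\, G x$, where $G \in \mathbb{R}^{m \times d}$ has i.i.d.\ standard Gaussian entries (a suitably scaled random projection onto a random $m$-dimensional subspace would work equally well). For a fixed unit vector $u$, the entries $(Gu)_1, \dots, (Gu)_m$ are i.i.d.\ $N(0,1)$, hence $Z := m\,\|f(u)\|^2 = \sum_{k=1}^m (Gu)_k^2$ has the chi-squared distribution with $m$ degrees of freedom and $\mathbb{E}[Z] = m$. The whole lemma thus reduces to a quantitative concentration bound for $\chi^2_m$ around its mean.

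Third --- the heart of the argument --- I would prove the two-sided tail bound via the Laplace transform. Using $\mathbb{E}[e^{tZ}] = (1-2t)^{-m/2}$ for $t < 1/2$, optimizing the Chernoff bound over $t$ gives $\Pr[Z \ge (1+\epsilon)m] \le \exp\!\big(-\frac{m}{2}(\epsilon - \log(1+\epsilon))\big)$ and $\Pr[Z \le (1-\epsilon)m] \le \exp\!\big(-\frac{m}{2}(-\epsilon - \log(1-\epsilon))\big)$. The key elementary inequalities $\epsilon - \log(1+\epsilon) \ge \frac{1}{2}(\epsilon^2 - \epsilon^3)$ and $-\epsilon - \log(1-\epsilon) \ge \frac{1}{2}\epsilon^2$ for $\epsilon \in (0,1)$ then yield a per-vector failure probability at most $2\exp(-c\,m\,\epsilon^2)$ with an absolute constant $c > 0$ (e.g.\ $c = 1/4$). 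I expect this step to be the main obstacle --- not because it is long, but because the $\epsilon^2$-scaling of the exponent is exactly what produces the $\epsilon^{-2}$ (rather than $\epsilon^{-1}$) in the embedding dimension, so those two logarithmic inequalities must be verified with care.

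Finally, choosing $m = \lceil C\,\epsilon^{-2}\log n\rceil$ with $C$ large enough that $2\exp(-c\,m\,\epsilon^2) < 1/\binom{n}{2}$, a union bound over the (at most) $\binom{n}{2}$ difference vectors shows the random $f$ is simultaneously an $\epsilon$-isometry on all pairs with probability strictly greater than $0$; hence such an $f$ exists. Since the identity map trivially works when $d \le \epsilon^{-2}\log n$, combining the two cases gives the stated bound $m = O(\min\{d, \epsilon^{-2}\log n\})$, and the linear construction automatically defines $f$ on all of $\mathbb{R}^d$ as claimed.
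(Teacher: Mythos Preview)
Your argument is correct and is essentially the standard Dasgupta--Gupta proof of the JL lemma: random Gaussian projection, reduction to $\chi^2_m$ concentration for a single unit vector, Chernoff/MGF bound yielding a per-vector failure probability $\exp(-c\,m\,\epsilon^2)$, and a union bound over the $\binom{n}{2}$ pairs. The logarithmic inequalities you isolate are exactly the right place to be careful, and the identity-map observation cleanly handles the $\min\{d,\cdot\}$.

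However, there is nothing to compare it to: the paper does \emph{not} prove this theorem. Theorem~\ref{thm: JL} is stated in the preliminaries as a quoted result, attributed to \cite[Lemma~1]{JLLemma84}, and the paper immediately moves on to the distributional version (Theorem~\ref{thm: DJL}) and its consequence Proposition~\ref{prop: union bound DJL}, which are the tools actually used downstream. The only related remark is the sentence that the mapping $f$ ``can be found in randomized polynomial time \citep{DG03},'' which points to precisely the argument you wrote. So your proposal is a valid self-contained proof of a lemma the paper simply cites; it neither agrees with nor differs from any proof in the paper, because there is none.
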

We also call a mapping $f$ satisfies (\ref{equ: JLT}) an $\epsilon$-JL Transform (or $\epsilon$-JLT in short) on $X$. The mapping $f$ can be found in randomized polynomial time \citep{DG03}. Moreover, if the mapping $f$ must be linear, then $m = \Omega\left(\min\{d, \epsilon^{-2} \log n \}\right)$ is optimal \citep{JNelson16}. The following Distributional Johnson-Lindenstrauss (DJL) lemma is useful.
\begin{theorem}[DJL lemma]
\label{thm: DJL}
For any $\epsilon\in (0,1), \delta \in(0,1 / 2)$ and integer $d>1$, there exists a distribution $\mathcal{D}_{\epsilon, \delta}$ over matrices $\Pi \in \mathbb{R}^{m \times d}$ for $m=O\left(\epsilon^{-2} \log (1 / \delta)\right)$ such that for any $z \in \mathbb{R}^d$ with $\|z\|=1$,
\begin{equation}\label{equ: DJL}
\mathbb{P}_{\Pi \sim \mathcal{D}_{\epsilon, \delta}}\left[\left|\|\Pi z\|^2-1\right|>\epsilon\right]<\delta.
\end{equation}
\end{theorem}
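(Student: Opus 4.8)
The plan is to construct $\mathcal{D}_{\epsilon,\delta}$ explicitly as a Gaussian random projection and then reduce the statement to a Chernoff-type concentration bound for a chi-squared random variable. Let $G \in \mathbb{R}^{m\times d}$ have i.i.d.\ entries $G_{ij}\sim N(0,1)$, set $\Pi = m^{-1/2}G$, and let $\mathcal{D}_{\epsilon,\delta}$ be the law of $\Pi$, with $m$ to be chosen at the end. Fix $z\in\mathbb{R}^d$ with $\|z\|=1$. Since each row $G_i$ of $G$ is an independent standard Gaussian vector, each coordinate $(Gz)_i=\langle G_i, z\rangle$ is $N(0,1)$ and the coordinates are independent; hence $Y:=Gz\sim N(0,I_m)$ and $\|\Pi z\|^2 = m^{-1}\|Y\|^2$, where $\|Y\|^2=\sum_{i=1}^m Y_i^2$ is a $\chi^2_m$ random variable with mean $m$. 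In particular $\mathbb{E}\,\|\Pi z\|^2 = 1$, and the event in (\ref{equ: DJL}) is exactly $\{\,\big|\|Y\|^2 - m\big| > \epsilon m\,\}$.

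Next I would bound each tail of $\|Y\|^2$ via its moment generating function. For $0<t<1/2$, independence gives $\mathbb{E}\,e^{t\|Y\|^2}=(1-2t)^{-m/2}$, so applying Markov's inequality to $e^{t\|Y\|^2}$ (and the analogous argument with $e^{-t\|Y\|^2}$) and optimizing over $t$ yields the standard bounds
\[
\mathbb{P}\big[\|Y\|^2\ge(1+\epsilon)m\big]\le \exp\!\Big(-\tfrac{m}{2}\big(\epsilon-\ln(1+\epsilon)\big)\Big)
\]
and similarly
\[
\mathbb{P}\big[\|Y\|^2\le(1-\epsilon)m\big]\le \exp\!\Big(-\tfrac{m}{2}\big(-\epsilon-\ln(1-\epsilon)\big)\Big).
\]
Using the Taylor expansions of $\ln(1\pm\epsilon)$ together with $\epsilon\in(0,1)$, one checks the elementary inequalities $\epsilon-\ln(1+\epsilon)\ge \epsilon^2/6$ and $-\epsilon-\ln(1-\epsilon)\ge \epsilon^2/2$, so both right-hand sides are at most $\exp(-m\epsilon^2/12)$. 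A union bound then gives $\mathbb{P}\big[\,|\|Y\|^2-m|>\epsilon m\,\big]\le 2\exp(-m\epsilon^2/12)$.

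Finally, choosing $m$ so that $2\exp(-m\epsilon^2/12)<\delta$, i.e.\ $m> 12\,\epsilon^{-2}\ln(2/\delta)$, makes the probability strictly less than $\delta$; since $\delta<1/2$ we have $\ln(2/\delta)=O(\log(1/\delta))$, so $m=O(\epsilon^{-2}\log(1/\delta))$ suffices, as claimed. The only nontrivial step is the chi-squared concentration in the second paragraph: the point is to extract the $\epsilon^2$ (rather than $\epsilon$) dependence in the exponent, which is precisely what forces the second-order Taylor control of $\ln(1\pm\epsilon)$ and the use of the hypothesis $\epsilon<1$; identifying the distribution and computing the generating function are routine. (One could alternatively take $G$ with i.i.d.\ Rademacher or suitably sparse entries and invoke a Hanson--Wright-type bound, but the Gaussian choice keeps the computation self-contained.)
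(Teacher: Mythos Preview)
Your argument is correct and is essentially the classical Dasgupta--Gupta proof via chi-squared concentration. Note, however, that the paper does not actually give its own proof of this statement: Theorem~\ref{thm: DJL} is quoted in the preliminaries as a known result (with pointers to \citet{DG03} and \citet{Matou08}) and is used as a black box in the subsequent arguments. So there is nothing to compare your proof against in the paper itself; you have supplied the standard self-contained verification that the Gaussian projection $\Pi = m^{-1/2}G$ already furnishes a DJL distribution with $m=O(\epsilon^{-2}\log(1/\delta))$.
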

We call a distribution $\mathcal{D}_{\epsilon, \delta}$ that satisfies (\ref{equ: DJL}) a DJL distribution.

For later convenience, we include the following proposition, which is a direct consequence of Theorem \ref{thm: DJL} and the union bound in the probability theory.
\begin{proposition}[Random projection for multiple sets]
\label{prop: union bound DJL}
Assume that there are $l$ non-empty collections of data points $X_1, \dots, X_l$ in $\mathbb{R}^d$ with $|X_{j}|=n_j$ $(1 \leq j \leq l)$. Denote $X = \bigcup_{j = 1}^l X_j$. Given any $0<\epsilon <1$ and $0<\delta<\frac{1}{\sum_{j=1}^{l}{n_j}}$, and let $D_{\epsilon,\delta}$ be a DJL distribution over $\mathbb{R}^{m\times d}$ with $m=O(\epsilon^{-2}\log(1/\delta))$. We have
\begin{equation}
\label{DJLT_Aj_prob}
\mathbb{P}_{\Pi \sim \mathcal{D}_{\epsilon, \delta}}\left[(1 - \epsilon)\|\mathbf{x}\|^2 \leq \|\Pi \mathbf{x}\|^2 \leq (1 + \epsilon)\|\mathbf{x}\|^2, ~ \forall \mathbf{x} \in X \right] \geq 1 - (\sum_{j = 1}^l n_j)\delta > 0.
\end{equation}
Thus, there exists a matrix $\Pi\in\mathbb{R}^{m\times d}$ such that
$$
(1 - \epsilon)\|\mathbf{x}\|^2 \leq \|\Pi \mathbf{x}\|^2 \leq (1 + \epsilon)\|\mathbf{x}\|^2, ~ \forall \mathbf{x} \in X.
$$
\end{proposition}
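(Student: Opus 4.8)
The plan is to reduce the statement to $l$ applications of the DJL lemma (Theorem~\ref{thm: DJL}), one per set $X_j$, combined via a union bound over all points in $X$. First I would observe that the DJL lemma as stated concerns a single \emph{fixed} unit vector $z$, so the first step is the standard homogenization trick: for any nonzero $\mathbf{x} \in \mathbb{R}^d$, applying the bound in \eqref{equ: DJL} to the unit vector $z = \mathbf{x}/\|\mathbf{x}\|$ and multiplying through by $\|\mathbf{x}\|^2$ yields
\[
\mathbb{P}_{\Pi \sim \mathcal{D}_{\epsilon, \delta}}\left[\left|\|\Pi \mathbf{x}\|^2 - \|\mathbf{x}\|^2\right| > \epsilon \|\mathbf{x}\|^2\right] < \delta,
\]
which is exactly the event that the two-sided distortion bound $(1-\epsilon)\|\mathbf{x}\|^2 \leq \|\Pi\mathbf{x}\|^2 \leq (1+\epsilon)\|\mathbf{x}\|^2$ \emph{fails} for that particular $\mathbf{x}$. (If $\mathbf{x} = 0$ the bound holds trivially for every $\Pi$, so such points may be discarded from the union bound.)

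Next I would take the union bound over the finite set $X = \bigcup_{j=1}^l X_j$, which has cardinality at most $\sum_{j=1}^l n_j$. The probability that the distortion bound fails for \emph{some} $\mathbf{x} \in X$ is therefore at most $(\sum_{j=1}^l n_j)\delta$, so the complementary event --- that the bound holds \emph{simultaneously} for all $\mathbf{x} \in X$ --- has probability at least $1 - (\sum_{j=1}^l n_j)\delta$. The hypothesis $\delta < 1/\sum_{j=1}^l n_j$ guarantees this lower bound is strictly positive, which establishes \eqref{DJLT_Aj_prob}. For the dimension bound, note that $m = O(\epsilon^{-2}\log(1/\delta))$ is exactly the embedding dimension furnished by Theorem~\ref{thm: DJL} for the chosen $\delta$; since all $l$ sets are projected by the same matrix $\Pi$ drawn from the same distribution, no increase in $m$ beyond this is needed.

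Finally, since \eqref{DJLT_Aj_prob} shows the good event occurs with strictly positive probability, the probabilistic method immediately yields the existence of at least one deterministic matrix $\Pi \in \mathbb{R}^{m\times d}$ realizing the simultaneous distortion bounds on all of $X$, which is the last assertion.

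I do not anticipate a serious obstacle here: the proposition is essentially a bookkeeping consequence of the DJL lemma, and the only points requiring any care are the homogenization step (ensuring the scaling by $\|\mathbf{x}\|^2$ is handled correctly and the zero vector is treated separately) and confirming that the same draw of $\Pi$ serves all $l$ sets, so that the union bound is taken over points rather than over sets with independent projections. The strict inequality $\delta < 1/\sum_j n_j$ is what makes the positivity --- and hence the existence conclusion --- go through, so I would flag its role explicitly.
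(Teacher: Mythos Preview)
Your proposal is correct and matches the paper's own treatment: the paper does not give a detailed proof but states the proposition is ``a direct consequence of Theorem~\ref{thm: DJL} and the union bound,'' which is precisely the homogenization-plus-union-bound argument you outline, followed by the probabilistic-method existence step.
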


\section{A Randomly Projected Convex Clustering Model}
\label{sec: PCCM}
The convex clustering model (\ref{model: WSON}) has promising recovery guarantees, but solving the model can be computationally challenging, especially when the feature dimension $d$ is high. In this section, we will propose a randomly projected convex clustering model of (\ref{model: WSON}) with much smaller feature dimensions. We will prove that the recovery guarantees will be preserved with a high probability for the random projected convex clustering model. More specifically, for the given collection of data points $A = \left\{\mathbf{a}_1, \dots, \mathbf{a}_n \right\} \subseteq \mathbb{R}^d$ considered in the general problem setting and a given $\epsilon \in (0, 1)$, we will construct an $\epsilon$-isometry mapping $f: \mathbb{R}^d \rightarrow \mathbb{R}^m$ with $m = O\left(\min\left\{d, \log(n)/\epsilon^2\right\}\right)$, where $m$ can be much smaller than $d$. We solve the following projected convex clustering model
\begin{equation}
\tag{RPCCM}
\label{model: PSON}
\min_{\hat{X} \in \mathbb{R}^{m \times n}} ~ \frac{1}{2} \sum_{i=1}^n\left\|\hat{\mathbf{x}}_i- f(\mathbf{a}_i)\right\|^2+\gamma \sum_{i<j} w_{i j}\left\|\hat{\mathbf{x}}_i-\hat{\mathbf{x}}_j\right\|.
\end{equation}
In this paper, we will choose $f$ as a random projection matrix motivated by the DJL lemma and call the corresponding model (\ref{model: PSON}) a randomly projected convex clustering model.

\subsection{An $\epsilon$-isometry Mapping for the Convex Clustering Model} A key observation is that the recovery guarantees of the convex clustering (e.g. Theorem \ref{thm: WSON}) mainly depend on the distances between data points within the same cluster and the distance between the centroids of different clusters. Thus, the recovery guarantees of the convex clustering model (\ref{model: WSON}) can be inherited by the model (\ref{model: PSON}) if we can construct an $\epsilon$-isometry mapping for some small enough $\epsilon > 0$ for the data points $A$ and the corresponding centroids. The next proposition shows the existence of a desired $\epsilon$-isometry mapping for the convex clustering model.

\begin{proposition}
\label{prop: DJL_C_X}
For the general problem setting, define $X_{\alpha} := \left\{\mathbf{a}_i - \mathbf{a}_j ~|~ i, j \in I_{\alpha}, i < j  \right\}$ $(1 \leq \alpha \leq K)$, and $X_c := \left\{ \mathbf{a}^{(\alpha)} - \mathbf{a}^{(\beta)} ~|~ 0 \leq \alpha < \beta \leq K \right\}$. Denote $N_1 = \sum_{\alpha = 1}^K |X_{\alpha}| = \sum_{\alpha = 1}^K C(n_{\alpha}, 2) < C(n, 2)$, and $N_2 = |X_c| = C(K + 1, 2)$. For any $0<\epsilon <1$, let $\delta=\frac{1}{(N_1 + N_2)^p}$, where $p > 1$, and let $D_{\epsilon,\delta}$ be a DJL distribution over $\mathbb{R}^{m\times d}$, where $m=O(\epsilon^{-2}\log(1/\delta))=O(p\epsilon^{-2}\log(N_1 + N_2))$.
Then for any $\Pi\in\mathbb{R}^{m\times d}$ randomly drawn from $D_{\epsilon,\delta}$, with probability at least $1 - \frac{1}{(N_1 + N_2)^{p-1}}$ that
\begin{subequations}\label{equ: preserve_A&C}
\begin{align}
&(1 - \epsilon)\|\mathbf{a}_i-\mathbf{a}_j\|^{2} \leq  \|\Pi (\mathbf{a}_i-\mathbf{a}_j)\|^2\
\leq(1 + \epsilon)\|\mathbf{a}_i-\mathbf{a}_j\|^{2},  ~ \mathbf{a}_i , \mathbf{a}_j \in V_{\alpha}, 0 \leq \alpha \leq K, \label{subequ: preserve_A}
\\
&(1 - \epsilon) \|\mathbf{a}^{(\alpha)}-\mathbf{a}^{(\beta)}\|^{2} \leq  \|\Pi (\mathbf{a}^{(\alpha)}-\mathbf{a}^{(\beta)})\|^2 \leq (1 + \epsilon)\|\mathbf{a}^{(\alpha)} - \mathbf{a}^{(\beta)}\|,  ~ 1 \leq \alpha \neq \beta \leq K. \label{subequ: preserve_C}
\end{align}
\end{subequations}
If $N_2 \leq n/2$, then it is enough to take $\delta = \frac{2}{n^{p+1}}$ and $m = O(\epsilon^{-2}(p+1)\log(n))$, and the inequalities (\ref{equ: preserve_A&C}) hold with probability at least $1 - \frac{1}{n^{p - 1}}$.
\end{proposition}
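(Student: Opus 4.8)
The plan is to notice that the two families of bounds in (\ref{equ: preserve_A&C}) assert exactly that a single random matrix $\Pi$ acts as an $\epsilon$-JL transform simultaneously on the finite set of vectors $X := \big(\bigcup_{\alpha=1}^{K} X_\alpha\big)\cup X_c$: the first family ranges over the within-cluster pairwise differences, which are precisely the elements of $\bigcup_\alpha X_\alpha$, and the second over the pairwise differences of the centroids $\mathbf a^{(0)},\dots,\mathbf a^{(K)}$, which are precisely the elements of $X_c$. Since $\big|\bigcup_\alpha X_\alpha\big|\le N_1$ and $|X_c|\le N_2$, we have $|X|\le N_1+N_2$, and the proposition becomes a direct application of the DJL lemma (Theorem \ref{thm: DJL}) together with the union bound --- essentially the packaged statement in Proposition \ref{prop: union bound DJL} applied to the collections $X_1,\dots,X_K,X_c$ (after discarding any empty ones).

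For the first assertion I would first dispose of the trivial cases: an element $\mathbf a_i-\mathbf a_j$ of some $X_\alpha$ may be the zero vector (if $\mathbf a_i=\mathbf a_j$), for which $(1-\epsilon)\|\mathbf 0\|^2\le\|\Pi\mathbf 0\|^2\le(1+\epsilon)\|\mathbf 0\|^2$ holds deterministically, and a singleton cluster simply contributes $X_\alpha=\emptyset$; by Assumption \ref{ass: unique-centroid} every element of $X_c$ is nonzero. So it suffices to control the at most $N_1+N_2$ nonzero members of $X$. For each such $\mathbf x$, apply Theorem \ref{thm: DJL} to the unit vector $\mathbf x/\|\mathbf x\|$: with $\delta=(N_1+N_2)^{-p}\in(0,1/2)$ (valid since $p>1$) and $m=O(\epsilon^{-2}\log(1/\delta))=O(p\,\epsilon^{-2}\log(N_1+N_2))$, one gets $\mathbb P_\Pi\big[\,\big|\|\Pi\mathbf x\|^2/\|\mathbf x\|^2-1\big|>\epsilon\,\big]<\delta$, i.e. $\mathbb P_\Pi\big[(1-\epsilon)\|\mathbf x\|^2\le\|\Pi\mathbf x\|^2\le(1+\epsilon)\|\mathbf x\|^2\big]\ge 1-\delta$ by homogeneity of the norm. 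A union bound over the $\le N_1+N_2$ vectors bounds the failure probability by $(N_1+N_2)\delta=(N_1+N_2)^{-(p-1)}$, which is the claim.

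For the refinement under $N_2\le n/2$ I would use the elementary count $N_1=\sum_{\alpha}C(n_\alpha,2)=\tfrac12\big(\sum_\alpha n_\alpha^2-n\big)<\tfrac12(n^2-n)=C(n,2)$ (since $\sum_\alpha n_\alpha^2<(\sum_\alpha n_\alpha)^2=n^2$ when there is more than one cluster), so that $N_1+N_2<C(n,2)+n/2=n^2/2$. Taking $\delta=2n^{-(p+1)}$, which satisfies $\delta<1/2$ and $\delta<1/(N_1+N_2)$ for $n\ge 2$ and $p>1$, the union-bound failure probability is at most $(N_1+N_2)\delta<(n^2/2)(2n^{-(p+1)})=n^{-(p-1)}$, while $\log(1/\delta)=(p+1)\log n-\log 2=O((p+1)\log n)$ yields $m=O(\epsilon^{-2}(p+1)\log n)$; the rest of the argument is unchanged.

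There is no genuine obstacle here: the whole proof is a one-line union bound over a well-chosen set of at most $O(n^2)$ vectors. The only points that require a little care are handling the degenerate cases (zero within-cluster differences and singleton clusters, which are trivial or empty) and the elementary counting $N_1+N_2<n^2/2$, which is what lets the $\log(N_1+N_2)$ factor be absorbed into $\log n$ so that the embedding dimension can be stated in the clean form $O(\epsilon^{-2}(p+1)\log n)$.
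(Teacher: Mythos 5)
Your proof is correct and follows essentially the same route the paper intends: the paper dispenses with a written proof by noting the result "can be proved as a consequence of Proposition \ref{prop: union bound DJL}", i.e.\ a union bound of the DJL guarantee over the at most $N_1+N_2$ vectors in $\bigl(\bigcup_\alpha X_\alpha\bigr)\cup X_c$, which is exactly what you carry out, including the counting $N_1+N_2 < C(n,2)+n/2 = n^2/2$ that yields the second part. Your explicit handling of the degenerate cases (zero differences, singleton clusters) and the check that $\delta\in(0,1/2)$ are welcome details the paper leaves implicit.
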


The above proposition can be proved as a consequence of Proposition \ref{prop: union bound DJL}. In practice, only the pair-wise distance between the input data points can be checked after a projection matrix $\Pi$ is randomly sampled (which covers the condition (\ref{subequ: preserve_A})). But an insight is $K$ should be much much smaller than $n$ (which is the reason for us to do clustering). The next corollary shows that the condition (\ref{subequ: preserve_C}) can be satisfied in much higher probability if (\ref{subequ: preserve_A}) holds.
\begin{proposition}
\label{corollary: DJL_C_X_check}
Let $\Pi \in \mathbb{R}^{m \times d}$ be a projection matrix sampled from a DJL distribution $D_{\epsilon, \delta}$ with $m = O(\epsilon^{-2}\log(1/\delta))$ and $\delta = \frac{1}{(N_1 + N_2)^p}$. Let $E_1$ be the event that $\Pi$ satisfies (\ref{subequ: preserve_A}) and $E_2$ be the event that $\Pi$ satisfies (\ref{subequ: preserve_C}), respectively.  Then, the conditional probability $\mathbb{P}\left[E_2 ~|~ E_1 \right]$ satisfies
\begin{equation}
\label{eq: condition-bound-DJL}
\mathbb{P}\left[E_2 ~|~ E_1 \right] \geq 1 - \frac{N_2}{(N_1 + N_2)^p - N_1}.
\end{equation}
If we further assume that $N_2 \leq n/2$ and $\delta = \frac{2}{n^{p+1}}$, then
\begin{equation}
\label{eq: condition-bound-DJL-sp}
\mathbb{P}\left[E_2 ~|~ E_1\right] \geq 1 - \frac{1}{n^p - n + 1}.
\end{equation}
\end{proposition}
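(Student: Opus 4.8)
The plan is to prove Proposition~\ref{corollary: DJL_C_X_check} as a conditional-probability estimate built directly on top of the union bound used in Proposition~\ref{prop: DJL_C_X}. First I would set up the basic probabilities: by Theorem~\ref{thm: DJL} and the union bound over the $N_1$ vectors in $\bigcup_\alpha X_\alpha$ and the $N_2$ vectors in $X_c$, the failure probability of $E_1$ is at most $N_1\delta$ and the failure probability of $E_2$ is at most $N_2\delta$, so $\mathbb{P}[E_1^c]\le N_1\delta$ and $\mathbb{P}[E_2^c]\le N_2\delta$. The conditional probability can then be written as
\begin{equation*}
\mathbb{P}[E_2 \mid E_1] = 1 - \mathbb{P}[E_2^c \mid E_1] = 1 - \frac{\mathbb{P}[E_2^c \cap E_1]}{\mathbb{P}[E_1]} \geq 1 - \frac{\mathbb{P}[E_2^c]}{\mathbb{P}[E_1]} \geq 1 - \frac{N_2\delta}{1 - N_1\delta}.
\end{equation*}

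Next I would substitute $\delta = \frac{1}{(N_1+N_2)^p}$ into this bound. The numerator becomes $\frac{N_2}{(N_1+N_2)^p}$ and the denominator becomes $1 - \frac{N_1}{(N_1+N_2)^p} = \frac{(N_1+N_2)^p - N_1}{(N_1+N_2)^p}$, so the quotient simplifies to $\frac{N_2}{(N_1+N_2)^p - N_1}$, which is exactly the claimed bound \eqref{eq: condition-bound-DJL}. One small thing to check here is that the denominator is positive, i.e. $(N_1+N_2)^p > N_1$, which holds since $p>1$ and $N_2\ge 1$ (indeed $(N_1+N_2)^p \ge N_1+N_2 > N_1$); this also guarantees that $\mathbb{P}[E_1] > 0$ so the conditioning is well-defined.

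For the second part, under the additional assumption $N_2 \le n/2$ and the choice $\delta = \frac{2}{n^{p+1}}$, I would recall from Proposition~\ref{prop: DJL_C_X} that $N_1 < C(n,2) = \frac{n(n-1)}{2} < \frac{n^2}{2}$ and $N_2 \le n/2$, so $N_1 + N_2 < \frac{n^2}{2} + \frac{n}{2} \le n^2$ for $n \ge 1$. Plugging into $1 - \frac{N_2\delta}{1-N_1\delta}$, the numerator $N_2\delta \le \frac{n}{2}\cdot\frac{2}{n^{p+1}} = \frac{1}{n^p}$ and the denominator $1 - N_1\delta \ge 1 - \frac{n^2}{2}\cdot\frac{2}{n^{p+1}} = 1 - \frac{1}{n^{p-1}}$. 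To reach the clean form $1 - \frac{1}{n^p - n + 1}$ stated in \eqref{eq: condition-bound-DJL-sp} I would instead keep $N_1 < \frac{n(n-1)}{2}$ exact so that $N_1\delta < \frac{n(n-1)}{2}\cdot\frac{2}{n^{p+1}} = \frac{n-1}{n^p}$, giving denominator $> \frac{n^p - n + 1}{n^p}$ and hence $\frac{N_2\delta}{1 - N_1\delta} < \frac{n^{-p}}{(n^p-n+1)/n^p} = \frac{1}{n^p - n + 1}$, as required.

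The computations are all elementary; the only genuinely delicate point — and the step I would be most careful about — is the very first inequality $\mathbb{P}[E_2^c \cap E_1] \le \mathbb{P}[E_2^c]$, which is used to avoid any independence assumption between $E_1$ and $E_2$. This is trivially true as a monotonicity of probability, but it is worth stating explicitly because one might be tempted (incorrectly) to treat the events as independent; the bound we use works regardless of the joint distribution of the coordinates of $\Pi$, so it applies to any DJL distribution. The remaining obstacle is purely bookkeeping: tracking which inequalities are strict versus non-strict so that the final denominators $n^p - n + 1$ and $(N_1+N_2)^p - N_1$ come out exactly as claimed rather than with an off-by-one slack.
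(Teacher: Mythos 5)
Your proposal is correct and follows essentially the same route as the paper: write $\mathbb{P}[E_2 \mid E_1] = 1 - \mathbb{P}[E_2^c \cap E_1]/\mathbb{P}[E_1]$, bound the numerator by $\mathbb{P}[E_2^c] \le N_2\delta$ and the denominator from below by $1 - N_1\delta$, and substitute the two choices of $\delta$. Your worked-out details for the second bound (using $N_1 < n(n-1)/2$ so that $N_1\delta < (n-1)/n^p$) are exactly the computation the paper leaves implicit with ``can be proved similarly.''
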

\begin{proof}
Direct calculation gives that
\[
\begin{array}{lcl}
\mathbb{P}\left[E_2 ~|~ E_1\right] &=& 1 - \mathbb{P}\left[E_2^c ~|~ E_1\right]\\[5pt]
&=& 1 - \frac{\mathbb{P}\left[E_1 \bigcap E_2^c \right]}{\mathbb{P}\left[E_1\right]}\\[5pt]
&\geq& 1 - \frac{\mathbb{P}\left[E_2^c\right]}{\mathbb{P}\left[E_1\right]}\\[5pt]
&\geq& 1 - \frac{N_2\delta}{1 - N_1\delta}\\[5pt]
&=& 1 - \frac{N_2}{(N_1 + N_2)^p - N_1}.
\end{array}
\]
The inequality (\ref{eq: condition-bound-DJL-sp}) can be proved similarly.
\end{proof}

\begin{remark}
The DJL distribution plays a role in the construction of the $\epsilon$-isometry mapping. Indeed, a vast amount of variants of the DJL lemma have been explored by designing the structure of DJL distributions, including the subgaussians \citep{indyk1998approximate,Achlioptas03,Matou08}, the Fast JL Transform \citep{Ailon2009fast,Ailon2009fast2,Ailon2013almost}, and the Sparse JL Transform \citep{dasgupta2010sparse,kane2010derandomized,kane2014sparser,cohen2018simple}. The particular choice of the DJL distribution is beyond the concern of this paper. In this paper, we will follow  \citep{Matou08} and consider the linear random projection matrix $\Pi=\frac{1}{\sqrt{m}}R \in \mathbb{R}^{m \times d}$, where $R_{ij}$ are independent random variables with zero mean and a uniform subgaussian tail.

\end{remark}

\subsection{Cluster Recovery Guarantees of the Randomly Projected Convex Clustering Model for the General Problem Setting} Next, we will establish the cluster recovery guarantees of the randomly projected convex clustering model (\ref{model: PSON}) for the general problem setting. For later convenience, we introduce some useful notation.

\begin{definition}
In the general problem setting, we consider the randomly projected convex clustering model (\ref{model: PSON}) with some specified weights $w_{ij} = w_{ji} \geq 0 ~ (1 \leq i \neq j \leq n)$ and a randomly sampled projection matrix $\Pi\in\mathbb{R}^{m\times d}$ (for some $m \geq 1$). Without explicitly mentioning the dependence on $\Pi$, we define
\begin{equation}
\label{equ: gamma_hat min &max}
\begin{array}{c}
\hat{\gamma}_{\min} := \max _{1 \leq \alpha \leq K} \max _{i, j \in I_\alpha}\left\{\frac{\left\|\Pi(\mathbf{a}_i-\mathbf{a}_j)\right\|}{n_\alpha w_{i j}-\mu_{i j}^{(\alpha)}}\right\}, \quad
\hat{\gamma}_{\max } :=\min _{1 \leq \alpha<\beta \leq K}\left\{\frac{\left\|\Pi(\mathbf{a}^{(\alpha)}-\mathbf{a}^{(\beta)})\right\|}
{\bar{w}^{(\alpha)}+\bar{w}^{(\beta)}}\right\},\\[5pt]
\hat{\gamma}_{\max2} :=\max _{1 \leq \alpha \leq K} \frac{\left\|\Pi(\mathbf{a}^{(0)}-\mathbf{a}^{(\alpha)})\right\|}{\bar{w}^{(\alpha)}}.
\end{array}
\end{equation}
\end{definition}

The next theorem shows that the recovery properties of the original convex clustering model can be preserved by the randomly projected convex clustering model in high probability. For convenience, we assume the following assumption holds for the rest of this paper.

\begin{assumption}
\label{ass: K-not-big}
The inequality $n > K(K+1)$ holds, where $n$ is the number of data points and $K$ is the number of hidden clusters.
\end{assumption}
The above assumption is mild and it is consistent with the purpose of the clustering task.

\begin{theorem}
\label{thm_PSON2}
Consider the general problem setting and the models (\ref{model: WSON}) and (\ref{model: PSON}) with the same specified weights $w_{ij}$. For any $0<\epsilon <1$, let $\delta=\frac{2}{n^p}$ with some $p>2$, and let $D_{\epsilon,\delta}$ be a DJL distribution over $\mathbb{R}^{m\times d}$ with $m=O(p\epsilon^{-2}\log(n))$. Here and below in this theorem, the notation $O(\cdot)$ depends on the same absolute constant. Without loss of generality, we assume that $m < d$ (or equivalently, we can assume $\sqrt{\frac{O(p\log(n))}{d}}<1$ and $\epsilon \in (\sqrt{\frac{O(p\log(n))}{d}},1)$ ). Define
\begin{equation}
\epsilon_{\min}=\sqrt{\frac{O(p\log(n))}{d}}, ~ \epsilon_{\sup}=\frac{r^{2}-1}{r^{2}+1}, ~ and ~\epsilon_{\sup2}=\frac{r_{2}^{2}-1}{r_{2}^{2}+1},
\end{equation}
where $r$ and $r_2$ are the constants defined in (\ref{equ: gamma min &max}). Let $\Pi\in\mathbb{R}^{m\times d}$ be a random projection matrix drawn from $D_{\epsilon,\delta}$.  Denote the optimal solution of the model (\ref{model: PSON}) with $\Pi$ and $\gamma \geq 0$ by $\left\{\hat{\mathbf{x}}_i^*(\gamma)\right\}_{i=1}^n$ and define the map $\hat{\phi}_{\gamma}\left(\mathbf{a}_i\right)=\hat{\mathbf{x}}_i^*$. Then, we have
\begin{enumerate}
\item If $r>\sqrt{\frac{1+\epsilon_{\min}}{1-\epsilon_{\min}}}$, then $\epsilon_{\min} < \epsilon_{\sup}$. For any $\epsilon \in [\epsilon_{\min}, \epsilon_{\sup})$,
and $\hat{\gamma} \in\left[\sqrt{1+\epsilon}\gamma_{\min }, \sqrt{1-\epsilon}\gamma_{\max }\right)$, with probability over $1-\frac{1}{n^{p-2}}$, the map $\hat{\phi}_{\hat{\gamma}}$ perfectly recovers $\mathcal{V}$.

\item If $r_{2}>\sqrt{\frac{1+\epsilon_{\min}}{1-\epsilon_{\min}}}$, then $\epsilon_{\min} < \epsilon_{\sup2}$. For any $\epsilon \in [\epsilon_{\min}, \epsilon_{\sup2})$,
and $\hat{\gamma} \in\left[\sqrt{1+\epsilon}\gamma_{\min }, \sqrt{1-\epsilon}\gamma_{\max2}\right)$, with  probability over $1-\frac{1}{n^{p-2}}$,
the map $\hat{\phi}_{\hat{\gamma}}$ recovers a non-trivial coarsening of $\mathcal{V}$.
\end{enumerate}
\end{theorem}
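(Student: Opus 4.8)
The plan is to transfer the recovery guarantees of Theorem~\ref{thm: WSON} from the original model (\ref{model: WSON}) to the projected model (\ref{model: PSON}) by controlling how the quantities $\gamma_{\min}$, $\gamma_{\max}$, $\gamma_{\max2}$ distort under the random projection $\Pi$. First I would invoke Proposition~\ref{prop: DJL_C_X}: with $\delta = 2/n^p$, $N_2 = C(K+1,2) \le n/2$ (which follows from Assumption~\ref{ass: K-not-big}), and $m = O(p\epsilon^{-2}\log n)$, the inequalities (\ref{equ: preserve_A&C}) hold simultaneously for all within-cluster differences $\mathbf{a}_i - \mathbf{a}_j$ and all centroid differences $\mathbf{a}^{(\alpha)} - \mathbf{a}^{(\beta)}$ with probability at least $1 - 1/n^{p-1} \ge 1 - 1/n^{p-2}$. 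Condition on this event for the rest of the argument. Note also the requirement $m < d$ is exactly the assumption $\epsilon > \epsilon_{\min} = \sqrt{O(p\log n)/d}$, so the hypotheses are consistent.

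Next I would translate (\ref{equ: preserve_A&C}) into bounds on the projected parameters (\ref{equ: gamma_hat min &max}). Taking square roots in (\ref{subequ: preserve_A}) and dividing by the positive quantities $n_\alpha w_{ij} - \mu^{(\alpha)}_{ij}$ (positive by Assumption~\ref{ass: weight-assumption}), then maximizing, gives
\begin{equation*}
\sqrt{1-\epsilon}\,\gamma_{\min} \le \hat{\gamma}_{\min} \le \sqrt{1+\epsilon}\,\gamma_{\min}.
\end{equation*}
Similarly, (\ref{subequ: preserve_C}) yields $\sqrt{1-\epsilon}\,\gamma_{\max} \le \hat{\gamma}_{\max} \le \sqrt{1+\epsilon}\,\gamma_{\max}$, and the analogous two-sided bound for $\hat{\gamma}_{\max2}$ (here $\mathbf{a}^{(0)} = \bar{\mathbf{a}}$ is one of the centroids indexed by $0$ in $X_c$). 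In particular $\sqrt{1+\epsilon}\,\gamma_{\min} \le \hat{\gamma}_{\min}$ and $\hat{\gamma}_{\max} \le \sqrt{1-\epsilon}\,\gamma_{\max}$ are \emph{false} in general; rather what I want is the reverse containment of intervals. The key inequality is that the interval $[\sqrt{1+\epsilon}\,\gamma_{\min}, \sqrt{1-\epsilon}\,\gamma_{\max})$ is contained in $[\hat{\gamma}_{\min}, \hat{\gamma}_{\max})$: indeed $\hat{\gamma}_{\min} \le \sqrt{1+\epsilon}\,\gamma_{\min}$ and $\sqrt{1-\epsilon}\,\gamma_{\max} \le \hat{\gamma}_{\max}$ hold on the conditioning event. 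So any $\hat{\gamma}$ in the smaller, explicitly computable interval lies in $[\hat{\gamma}_{\min}, \hat{\gamma}_{\max})$.

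Then I would apply Theorem~\ref{thm: WSON} to the projected data $\{f(\mathbf{a}_i)\} = \{\Pi \mathbf{a}_i\}$: the model (\ref{model: PSON}) is exactly an instance of (\ref{model: WSON}) in $\mathbb{R}^m$ with data $\Pi \mathbf{a}_i$ and the same weights, whose associated ratio is $\hat{r} = \hat{\gamma}_{\max}/\hat{\gamma}_{\min}$. From the two-sided bounds, $\hat{r} \ge \frac{\sqrt{1-\epsilon}}{\sqrt{1+\epsilon}} r$, and a short computation shows $\hat{r} > 1$ precisely when $\epsilon < \epsilon_{\sup} = (r^2-1)/(r^2+1)$; the hypothesis $r > \sqrt{(1+\epsilon_{\min})/(1-\epsilon_{\min})}$ is equivalent to $\epsilon_{\min} < \epsilon_{\sup}$, guaranteeing the interval $[\epsilon_{\min}, \epsilon_{\sup})$ is nonempty. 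Also Assumption~\ref{ass: unique-centroid} is preserved since $\Pi$ is injective on the relevant centroid differences on the conditioning event. Hence part~1 of Theorem~\ref{thm: WSON} applies to the projected data and $\hat{\phi}_{\hat\gamma}$ perfectly recovers $\mathcal{V}$, with the stated probability. Part~2 is identical with $\gamma_{\max2}$, $r_2$, $\epsilon_{\sup2}$ in place of $\gamma_{\max}$, $r$, $\epsilon_{\sup}$.

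The only real subtlety — and the step I would be most careful about — is the bookkeeping between the high-probability event of Proposition~\ref{prop: DJL_C_X} and the deterministic interval inclusions: one must check that (\ref{subequ: preserve_A}) is being used to push $\hat{\gamma}_{\min}$ \emph{down} (via the upper bound $\hat{\gamma}_{\min} \le \sqrt{1+\epsilon}\,\gamma_{\min}$) while (\ref{subequ: preserve_C}) is used to push $\hat{\gamma}_{\max}$ \emph{up}, so that the explicit interval sits inside the true recovery interval; getting the direction of a single inequality wrong collapses the argument. Everything else is routine: square roots are monotone, the weights are positive by Assumption~\ref{ass: weight-assumption}, and the probability $1 - 1/n^{p-1}$ from Proposition~\ref{prop: DJL_C_X} dominates the claimed $1 - 1/n^{p-2}$ since $p > 2$.
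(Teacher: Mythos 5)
Your proposal is correct and follows essentially the same route as the paper: condition on the event of Proposition \ref{prop: DJL_C_X}, derive the two-sided bounds $\sqrt{1-\epsilon}\,\gamma_{\min}\le\hat{\gamma}_{\min}\le\sqrt{1+\epsilon}\,\gamma_{\min}$ (and likewise for $\hat{\gamma}_{\max}$, $\hat{\gamma}_{\max2}$), deduce the inclusion $[\sqrt{1+\epsilon}\,\gamma_{\min},\sqrt{1-\epsilon}\,\gamma_{\max})\subseteq[\hat{\gamma}_{\min},\hat{\gamma}_{\max})$, check nonemptiness via the equivalence of $r>\sqrt{(1+\epsilon)/(1-\epsilon)}$ with $\epsilon<\epsilon_{\sup}$, and apply Theorem \ref{thm: WSON} to the projected data. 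The only quibble is an intermediate index slip: with $\delta=2/n^{p}$ the union bound of Proposition \ref{prop: DJL_C_X} gives probability $1-1/n^{p-2}$ rather than $1-1/n^{p-1}$, but since you immediately weaken your claim to $1-1/n^{p-2}$ this does not affect the argument.
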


\begin{proof}
It directly follows Proposition \ref{prop: DJL_C_X} that, with probability over $1 - \frac{1}{n^{p-2}}$, the following statements hold:
\begin{itemize}
    \item[(i)] The centroids $\left\{ \Pi\mathbf{a}^{(0)}, \dots, \Pi\mathbf{a}^{(K)}\right\}$ of the embedded data are distinct.
    \item[(ii)] The parameters $\hat{\gamma}_{\min}$, $\hat{\gamma}_{\max}$, and $\hat{\gamma}_{\max2}$ defined in (\ref{equ: gamma_hat min &max}) satisfy the following inequalities:
    \[
    \begin{array}{c}
    \sqrt{1-\epsilon}\gamma_{\min } \leq \hat{\gamma}_{\min } \leq \sqrt{1+\epsilon}\gamma_{\min }, \quad \sqrt{1-\epsilon}\gamma_{\max } \leq \hat{\gamma}_{\max } \leq \sqrt{1+\epsilon}\gamma_{\max }, \\[5pt] \sqrt{1-\epsilon}\gamma_{\max2 } \leq \hat{\gamma}_{\max2} \leq \sqrt{1+\epsilon}\gamma_{\max2}.
    \end{array}
    \]
\end{itemize}
The above implies that
\begin{equation}
\label{equ: hat gamma min&max and gamma min&max}
\begin{array}{c}
\left[\sqrt{1+\epsilon}\gamma_{\min}, \sqrt{1-\epsilon}\gamma_{\max}\right)\subseteq\left[\hat{\gamma}_{\min},\hat{\gamma}_{\max}\right),\\
\left[\sqrt{1+\epsilon}\gamma_{\min}, \sqrt{1-\epsilon}\gamma_{\max2}\right)\subseteq\left[\hat{\gamma}_{\min},\hat{\gamma}_{\max2}\right).
\end{array}
\end{equation}

Now, we prove the first part of the theorem. We claim here that it is sufficient to show: if $r>\sqrt{\frac{1+\epsilon_{\min}}{1-\epsilon_{\min}}}$, then $\epsilon_{\min} < \epsilon_{\sup}$, and for any $\epsilon \in [\epsilon_{\min}, \epsilon_{\sup})$, $\left[\sqrt{1+\epsilon}\gamma_{\min}, \sqrt{1-\epsilon}\gamma_{\max}\right)$ is nonempty.
In fact, if $\left[\sqrt{1+\epsilon}\gamma_{\min}, \sqrt{1-\epsilon}\gamma_{\max}\right)$ is nonempty, then by the first inclusion of (\ref{equ: hat gamma min&max and gamma min&max}), $\left[\hat{\gamma}_{\min},\hat{\gamma}_{\max}\right)$ is nonempty. Applying Theorem \ref{thm: WSON} to the embbedded data $\Pi A$ implies that for any
$\hat{\gamma} \in \left[\hat{\gamma}_{\min},\hat{\gamma}_{\max}\right)$, the map $\hat{\phi}_{\hat{\gamma}}$ perfectly recovers $\mathcal{V}$.

On the one hand, we have
\[
\begin{array}{lcl}
r > \sqrt{\frac{1 + \epsilon_{\min}}{1 - \epsilon_{\min}}} &\implies& (1 - \epsilon_{\min}) r^2 > 1 + \epsilon_{\min} \\
& \implies & (r^2 - 1) > \epsilon_{\min}(r^2 + 1)\\
& \implies & \epsilon_{\min} < \epsilon_{\sup}.
\end{array}
\]
This implies that the interval $[\epsilon_{\min}, \epsilon_{\sup})$ is nonempty. On the other hand, we have
\[
\begin{array}{lcl}
 \epsilon < \epsilon_{\sup} & \implies & \epsilon < \frac{r^2 - 1}{r^2 + 1}\\
 & \implies & \frac{1 + \epsilon}{1 - \epsilon} < r^2 \\
 & \implies & \frac{1 + \epsilon}{1 - \epsilon} < \frac{\gamma_{\max}^2}{\gamma_{\min}^2}\\
 & \implies & \sqrt{1 + \epsilon} \gamma_{\min} < \sqrt{1 - \epsilon}\gamma_{\max}.
\end{array}
\]
Thus, we have proved the first part of the theorem. The second part of the theorem can be proved in a similar way.
\end{proof}

We can obtain an $\epsilon$-isometry mapping in randomized polynomial time \citep{DG03} and we can check the $\epsilon$-isometry of the mapping on the data $A$ (but not for the centroids). The following corollary is useful. The proof of the corollary follows directly from Theorem \ref{thm_PSON2} and Proposition \ref{corollary: DJL_C_X_check}.
\begin{corollary}
Let $\Pi \in \mathbb{R}^{m \times d}$ be a random projection matrix drawn from $D_{\epsilon, \delta}$ as in Theorem \ref{thm_PSON2}. If we further assume that $\Pi$ satisfies (\ref{subequ: preserve_A}), then, the statements of Theorem \ref{thm_PSON2} hold with probability at least $1 - \frac{1}{n^{p-1} - n + 1}$ under the same assumptions.
\end{corollary}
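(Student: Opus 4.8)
\noindent\emph{Proof idea.} The plan is to treat this as a conditional‑probability refinement of Theorem~\ref{thm_PSON2}, isolating exactly the ``centroid part'' of the failure probability and bounding it by Proposition~\ref{corollary: DJL_C_X_check}. Write $E_1$ for the event that $\Pi$ satisfies (\ref{subequ: preserve_A}) and $E_2$ for the event that $\Pi$ satisfies (\ref{subequ: preserve_C}), so that $E_1\cap E_2$ is precisely the event that (\ref{equ: preserve_A&C}) holds. First I would observe that the entire conclusion of Theorem~\ref{thm_PSON2} was derived in its proof from just two facts — that the embedded centroids $\{\Pi\mathbf{a}^{(0)},\dots,\Pi\mathbf{a}^{(K)}\}$ are distinct, and that $\hat\gamma_{\min},\hat\gamma_{\max},\hat\gamma_{\max2}$ obey the sandwich bounds listed there — and both facts hold on $E_1\cap E_2$. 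Hence, conditioning on $E_1$, the conclusion of Theorem~\ref{thm_PSON2} holds whenever $E_2$ also occurs, so it suffices to lower bound $\mathbb{P}[E_2\mid E_1]$.

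Next I would invoke Proposition~\ref{corollary: DJL_C_X_check}, specifically the bound (\ref{eq: condition-bound-DJL-sp}), which applies under the two hypotheses $N_2\le n/2$ and $\delta=2/n^{p+1}$. The first holds because $N_2 = C(K+1,2)=K(K+1)/2$ and Assumption~\ref{ass: K-not-big} gives $n>K(K+1)$, hence $N_2<n/2$. For the second I would reconcile the two parametrizations: the choices $\delta=2/n^{p}$ and $m=O(p\epsilon^{-2}\log n)$ fixed in Theorem~\ref{thm_PSON2} are exactly what Propositions~\ref{prop: DJL_C_X} and \ref{corollary: DJL_C_X_check} produce when their parameter ``$p$'' is replaced by ``$p-1$'' (so that ``$p+1$'' there equals ``$p$'' here). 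With that substitution, (\ref{eq: condition-bound-DJL-sp}) reads $\mathbb{P}[E_2\mid E_1]\ge 1-\frac{1}{n^{p-1}-n+1}$, which is exactly the probability asserted in the corollary.

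Finally, to assemble the statement, on the event $E_1\cap E_2$ I would simply rerun the argument in the proof of Theorem~\ref{thm_PSON2} verbatim: distinctness of the embedded centroids and the sandwich bounds give the inclusions (\ref{equ: hat gamma min&max and gamma min&max}), and then Theorem~\ref{thm: WSON} applied to the embedded data $\Pi A$ yields that $\hat\phi_{\hat\gamma}$ perfectly recovers $\mathcal{V}$ for $\hat\gamma\in[\sqrt{1+\epsilon}\gamma_{\min},\sqrt{1-\epsilon}\gamma_{\max})$ (resp.\ recovers a non‑trivial coarsening for $\hat\gamma\in[\sqrt{1+\epsilon}\gamma_{\min},\sqrt{1-\epsilon}\gamma_{\max2})$), the relevant interval being nonempty exactly when $\epsilon\in[\epsilon_{\min},\epsilon_{\sup})$ (resp.\ $[\epsilon_{\min},\epsilon_{\sup2})$) as shown there. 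Since all of this is valid on $E_1\cap E_2$ and $\mathbb{P}[E_2\mid E_1]\ge 1-\frac{1}{n^{p-1}-n+1}$, the corollary follows.

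I do not expect a genuine obstacle here: the only thing to be careful about is the off‑by‑one bookkeeping in the exponent $p$ between Theorem~\ref{thm_PSON2} and Propositions~\ref{prop: DJL_C_X}/\ref{corollary: DJL_C_X_check}, together with checking $N_2\le n/2$ from Assumption~\ref{ass: K-not-big}; beyond that the corollary is a direct composition of results already established. One sanity check worth recording is that $1-\frac{1}{n^{p-1}-n+1}\ge 1-\frac{1}{n^{p-2}}$ for $p\ge 2$, so the conditional guarantee is indeed an improvement over the unconditional one in Theorem~\ref{thm_PSON2} — which is the whole point of conditioning on the checkable event $E_1$.
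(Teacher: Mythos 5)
Your proposal is correct and follows exactly the route the paper intends (the paper simply states that the corollary "follows directly from Theorem \ref{thm_PSON2} and Proposition \ref{corollary: DJL_C_X_check}"): condition on $E_1$, bound $\mathbb{P}[E_2\mid E_1]$ via (\ref{eq: condition-bound-DJL-sp}), and rerun the proof of Theorem \ref{thm_PSON2} on $E_1\cap E_2$. Your explicit handling of the off-by-one shift in $p$ between $\delta=2/n^{p}$ here and $\delta=2/n^{p+1}$ in Proposition \ref{corollary: DJL_C_X_check}, and the check that Assumption \ref{ass: K-not-big} gives $N_2<n/2$, are exactly the bookkeeping the paper leaves implicit.
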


\begin{remark}
We want to make some remarks on the obtained recovery guarantees of the model (\ref{model: PSON}).
\begin{enumerate}
    \item For convenience, we assumed $K(K+1) \leq n$. But we can also easily obtain recovery guarantees regarding $N_1$ and $N_2$ as defined in Proposition \ref{prop: DJL_C_X}.
    \item The embedding dimension is $m=O\left(p\epsilon^{-2} \log n\right)$, which only depends on $\epsilon$, $n$, and $p$, but it is independent of the data dimension $d$. Also, the embedding dimension grows very slowly with respect to $n$ (in $O(\log(n))$).
    \item We derives the lower bound $\epsilon_{\min}$ and the upper bound $\epsilon_{\sup}$ of $\epsilon$ for perfect recovery of the model (\ref{model: PSON}). In particular, the lower bound $\epsilon_{\min}$ can be very small for high dimensional data. The upper bound $\epsilon_{\sup}$ depends on the ratio of $\gamma_{\max}$ and $\gamma_{\min}$, and it is independent of the scale of the data.
    \item We want to mention that, the weights used in the model (\ref{model: WSON}) and the model (\ref{model: PSON}) are the same.

    \item The dimension reduction based on the JL lemma has been also investigated for the K-means model \citep{JL-Kmeans15}. However, for the K-means model, only the cost (the optimal objective function value of the K-means model) can be preserved up to a tolerance $\epsilon > 0$. Here, we proved that the perfect recovery guarantee of the convex clustering model can be inherited. A comparison of the empirical performance between the randomly projected K-means model and the randomly projected convex clustering model can be found later in the numerical experiments.
\end{enumerate}
\end{remark}

The embedding dimension $m$ in Theorem \ref{thm_PSON2} depends on $n$ of the order $O(\log(n))$. Next, we will further improves it from $O(\log(n))$ to $O(\log(K))$. The key insights come from the estimate of the spectral norm of the random matrices. The following lemma is useful, which is a direct consequence of Theorem 4.6.1 and Lemma 3.4.2 in \citep{vershynin2018high}.

\begin{lemma}[Two-sided bound on sub-gaussian matrices]
\label{thm: extreme singular subgaussian}
Let $\Pi=\frac{1}{\sqrt{m}}R\in\mathbb{R}^{m\times d}$ ($m\leq d$), where $R_{ij}$ are independent random variables with $\mathbb{E}\left[R_{i j}\right]=0$,$\operatorname{Var}\left[R_{i j}\right]=1$ and the subgaussian norm $\kappa := \|R_{ij}\|_{\psi_2} := \inf \{s > 0: \mathbb{E}[\exp(R_{ij}^2 / s^2)] \leq 2\}$. Let $s_{1}(\Pi)$ be the largest singular value of $\Pi$, and let $s_{m}(\Pi)$ be the smallest singular value of $\Pi$. Then for any $t \geq 0$, we have

\begin{equation}
\label{equ: subgaussian singular value estimation}
\underbar{S}(m,d,t)\leq s_{m}(\Pi) \leq s_{1}(\Pi)\leq\bar{S}(m,d,t)
\end{equation}
with probability at least $1-2 \exp \left(-t^2\right)$. Here, $C_{\kappa}^2>0$ is a constant that only depends on $\kappa$, and
$\bar{S}(m,d,t)=\frac{\sqrt{d} + C_{\kappa}^{2}t}{\sqrt{m}} + C_{\kappa}^{2}$, and $\underbar{S}(m,d,t)=\frac{\sqrt{d} - C_{\kappa}^{2}t}{\sqrt{m}} - C_{\kappa}^{2}$.
\end{lemma}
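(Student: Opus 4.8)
The plan is to derive the lemma as a transpose-and-rescale consequence of the two cited results in \citep{vershynin2018high}. First I would pass to the transpose: singular values are invariant under transposition, so $s_i(\Pi)=s_i(\Pi^{\top})$, and since $\Pi=\frac{1}{\sqrt m}R$ we have $s_i(\Pi)=\frac{1}{\sqrt m}\,s_i(R^{\top})$ for every $i$. Because $m\leq d$, the matrix $R^{\top}\in\mathbb{R}^{d\times m}$ is a \emph{tall} matrix (more rows than columns), which is exactly the orientation in which Theorem 4.6.1 of \citep{vershynin2018high} is stated. Hence it suffices to bound the smallest and largest singular values $s_m(R^{\top})$ and $s_1(R^{\top})$.

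Next I would verify that $R^{\top}$ satisfies the hypotheses of Theorem 4.6.1. Its $d$ rows are precisely the $d$ columns of $R$; these are independent random vectors in $\mathbb{R}^m$ because all the entries $R_{ij}$ are independent. Each such row has mean zero, and since its $m$ coordinates are independent with unit variance, its covariance matrix is the $m\times m$ identity, so it is isotropic. By Lemma 3.4.2 of \citep{vershynin2018high}, a random vector with independent mean-zero coordinates is sub-gaussian with $\psi_2$-norm at most an absolute constant $c_0$ times the largest coordinate $\psi_2$-norm; therefore each row of $R^{\top}$ has $\psi_2$-norm at most $c_0\kappa$.

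Then, applying Theorem 4.6.1 to $A=R^{\top}$ (with ambient dimension $m$ and $d$ rows) for the same $t\geq 0$, we obtain that with probability at least $1-2\exp(-t^2)$,
$$\sqrt d - C(c_0\kappa)^2(\sqrt m + t)\ \leq\ s_m(R^{\top})\ \leq\ s_1(R^{\top})\ \leq\ \sqrt d + C(c_0\kappa)^2(\sqrt m + t),$$
where $C$ is the absolute constant from that theorem. Dividing through by $\sqrt m$, setting $C_{\kappa}^2:=C c_0^2\kappa^2$ (a constant depending only on $\kappa$), and using $\frac{C_{\kappa}^2(\sqrt m+t)}{\sqrt m}=C_{\kappa}^2+\frac{C_{\kappa}^2 t}{\sqrt m}$, the upper bound becomes $\frac{\sqrt d + C_{\kappa}^2 t}{\sqrt m}+C_{\kappa}^2=\bar S(m,d,t)$ and the lower bound becomes $\frac{\sqrt d - C_{\kappa}^2 t}{\sqrt m}-C_{\kappa}^2=\underbar S(m,d,t)$, which together with $s_i(\Pi)=\frac{1}{\sqrt m}s_i(R^{\top})$ is exactly (\ref{equ: subgaussian singular value estimation}). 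Note that no union bound is needed, so the probability $1-2\exp(-t^2)$ carries over verbatim.

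I do not expect a genuine obstacle here; the only points requiring care are purely bookkeeping. One must track which dimension plays the role of ``number of rows'' versus ``ambient dimension'' in Theorem 4.6.1 — it is the assumption $m\leq d$ that makes $R^{\top}$ tall, so that $\sqrt d$ becomes the dominant term and $\sqrt m$ enters only through the deviation term — and one must absorb the two separate absolute constants (from Lemma 3.4.2 and from Theorem 4.6.1) together with $\kappa$ into the single constant $C_{\kappa}^2$ so that the final expressions match the stated forms of $\bar S$ and $\underbar S$.
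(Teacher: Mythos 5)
Your proposal is correct and matches the paper's intended argument exactly: the paper states the lemma as a direct consequence of Theorem 4.6.1 and Lemma 3.4.2 of \citep{vershynin2018high} without writing out the details, and your transpose-and-rescale derivation (using $m\leq d$ to make $R^{\top}$ tall, Lemma 3.4.2 to bound the row $\psi_2$-norms by $c_0\kappa$, and absorbing the constants into $C_{\kappa}^2$) is precisely the bookkeeping that citation leaves implicit.
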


The next theorem shows that the embedding dimension can be independent of the number of data points $n$.

\begin{theorem}
\label{thm_PSON3}
Consider the general problem setting and the models (\ref{model: WSON}) and (\ref{model: PSON}) with the same specified weights $w_{ij}$. For any $0<\epsilon <1$, let $\Pi=\frac{1}{\sqrt{m}}R\in\mathbb{R}^{m\times d}$ with $m=O(p\epsilon^{-2}\log(K))$, where $R_{ij}$ are independent random variables with $\mathbb{E}\left[R_{i j}\right]=0$,$\operatorname{Var}\left[R_{i j}\right]=1$, and with the subgaussian norm $\kappa=\left\|R_{i,j}\right\|_{\psi_2}$. Here and below in this theorem, the notation $O(\cdot)$ depends on the same absolute constant. Without loss of generality, we assume that $m < d$ ( or equivalently, we can assume that $\sqrt{\frac{O(p\log(K))}{d}}<1$ and $\epsilon \in (\sqrt{\frac{O(p\log(K))}{d}},1)$). Let $\Pi\in\mathbb{R}^{m\times d}$ be a random projection matrix drawn from $D_{\epsilon,\delta}$. By Lemma \ref{thm: extreme singular subgaussian}, there exists some constant $C_{\kappa}^{2}>0$ that only depends on $\kappa$ such that the inequality (\ref{equ: subgaussian singular value estimation}) holds for any $t\geq 0$ with probability over $1-2 \exp \left(-t^2\right)$.
Define
\begin{equation}
\begin{array}{c}
C_0=\frac{\sqrt{O(p\log K)}}{\sqrt{d} + C_{\kappa}^{2}t},\;
\tilde{\epsilon}_{\sup}=rC_0\sqrt{\frac{r^2C_0^2}{4}+C_{\kappa}^{2}C_0+1}-C_{\kappa}^{2}C_0-\frac{r^2C_0^2}{2},\\
\tilde{\epsilon}_{\sup2}=r_{2}C_0\sqrt{\frac{r_{2}^2C_0^2}{4}+C_{\kappa}^{2}C_0+1}-C_{\kappa}^{2}C_0-\frac{r_{2}^2C_0^2}{2},\;\tilde{\epsilon}_{\min}=\sqrt{\frac{O(p\log K)}{d}}.
\end{array}
\end{equation}
Denote the optimal solution of the model (\ref{model: PSON}) with $\Pi$ and $\gamma \geq 0$ by $\left\{\hat{\mathbf{x}}_i^*(\gamma)\right\}_{i=1}^n$ and define the map $\hat{\phi}_{\gamma}\left(\mathbf{a}_i\right)=\hat{\mathbf{x}}_i^*$. Then, we have:

1. If $r>\frac{1+C_{\kappa}^{2}+\frac{ C_{\kappa}^{2}t}{\sqrt{d}}}{\sqrt{1-\tilde{\epsilon}_{\min}}}$, then  $\tilde{\epsilon}_{\min}<\tilde{\epsilon}_{\sup}$. For any $
\epsilon\in[\tilde{\epsilon}_{\min},\tilde{\epsilon}_{\sup})$,
and $\hat{\gamma} \in\left[\bar{S}(m,d,t)\gamma_{\min},\sqrt{1-\epsilon}\gamma_{\max }\right)$, with probability over $1-\frac{1}{K^{p-2}}-2\exp(-t^2)$, the map $\hat{\phi}_{\gamma}$ perfectly recovers $\mathcal{V}$.

2. If $r_{2}>\frac{1+C_{\kappa}^{2}+\frac{ C_{\kappa}^{2}t}{\sqrt{d}}}{\sqrt{1-\tilde{\epsilon}_{\min}}}$, then $\tilde{\epsilon}_{\min}<\tilde{\epsilon}_{\sup2}$. For any $\epsilon\in[\tilde{\epsilon}_{\min},\tilde{\epsilon}_{\sup2})$, and $\hat{\gamma} \in\left[\bar{S}(m,d,t)\gamma_{\min},\sqrt{1-\epsilon}\gamma_{\max2 }\right)$, with probability over $1-\frac{1}{K^{p-2}}-2\exp(-t^2)$,
the map $\hat{\phi}_{\gamma}$ recovers a non-trivial coarsening of $\mathcal{V}$.
\end{theorem}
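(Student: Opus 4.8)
\emph{Proof proposal.} The plan is to re-run the argument of Theorem \ref{thm_PSON2}, but to split the bookkeeping: the within-cluster differences $\mathbf{a}_i-\mathbf{a}_j$ (of which there may be $\Theta(n^2)$) will be handled by the spectral-norm estimate of Lemma \ref{thm: extreme singular subgaussian}, which costs nothing in the embedding dimension, while the centroid differences $\mathbf{a}^{(\alpha)}-\mathbf{a}^{(\beta)}$ (only $C(K+1,2)$ of them) will be handled by the DJL union bound, which is the only place $m$ sees the data and where it will depend solely on $K$. Note first that (\ref{model: PSON}) with $f=\Pi$ is exactly (\ref{model: WSON}) applied to the embedded points $\{\Pi\mathbf{a}_i\}_{i=1}^n$ with the same weights; since $\Pi$ is linear, $\frac1{n_\alpha}\sum_{i\in I_\alpha}\Pi\mathbf{a}_i=\Pi\mathbf{a}^{(\alpha)}$, and since the quantities $\mu_{ij}^{(\alpha)}$ depend only on the weights, Assumption \ref{ass: weight-assumption} is inherited verbatim by the embedded data. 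Hence it suffices to (a) ensure the embedded centroids $\Pi\mathbf{a}^{(0)},\dots,\Pi\mathbf{a}^{(K)}$ are distinct, (b) exhibit a nonempty subinterval of $[\hat\gamma_{\min},\hat\gamma_{\max})$ (resp.\ of $[\hat\gamma_{\min},\hat\gamma_{\max2})$), and (c) apply Theorem \ref{thm: WSON} to $\{\Pi\mathbf{a}_i\}$.

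For the centroids, apply Proposition \ref{prop: union bound DJL} to the single collection $X_c=\{\mathbf{a}^{(\alpha)}-\mathbf{a}^{(\beta)}\mid 0\le\alpha<\beta\le K\}$, of cardinality $N_2=C(K+1,2)\le K^2$, with $\delta$ of order $K^{-p}$; then with probability at least $1-1/K^{p-2}$ the random $\Pi$ preserves every centroid pairwise distance up to a factor $1\pm\epsilon$, and the required embedding dimension is $m=O(\epsilon^{-2}\log(1/\delta))=O(p\epsilon^{-2}\log K)$. On this event, Assumption \ref{ass: unique-centroid} gives (a), and moreover $\hat\gamma_{\max}\ge\sqrt{1-\epsilon}\,\gamma_{\max}$ and $\hat\gamma_{\max2}\ge\sqrt{1-\epsilon}\,\gamma_{\max2}$. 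For the within-cluster part, since $\Pi=\frac1{\sqrt m}R$ has exactly the form of Lemma \ref{thm: extreme singular subgaussian}, with probability at least $1-2\exp(-t^2)$ we have $s_1(\Pi)\le\bar S(m,d,t)$, so $\|\Pi(\mathbf{a}_i-\mathbf{a}_j)\|\le\bar S(m,d,t)\|\mathbf{a}_i-\mathbf{a}_j\|$ for all $i,j$ and therefore $\hat\gamma_{\min}\le\bar S(m,d,t)\,\gamma_{\min}$. A union bound then gives, with probability at least $1-1/K^{p-2}-2\exp(-t^2)$, the inclusions $[\bar S(m,d,t)\gamma_{\min},\sqrt{1-\epsilon}\gamma_{\max})\subseteq[\hat\gamma_{\min},\hat\gamma_{\max})$ and $[\bar S(m,d,t)\gamma_{\min},\sqrt{1-\epsilon}\gamma_{\max2})\subseteq[\hat\gamma_{\min},\hat\gamma_{\max2})$.

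What remains, and where the real work lies, is the algebra showing these subintervals are nonempty for $\epsilon\in[\tilde\epsilon_{\min},\tilde\epsilon_{\sup})$ and that this range is itself nonempty under the hypothesis on $r$. Substituting $m=O(p\log K)/\epsilon^2$ into $\bar S$ yields the key identity $\bar S(m,d,t)=\epsilon/C_0+C_\kappa^2$ — it is precisely this self-referential dependence (of $\bar S$ on $m$, and of $m$ on $\epsilon$) that turns the containment requirement $\bar S(m,d,t)<\sqrt{1-\epsilon}\,r$ into a genuine inequality in $\epsilon$. Squaring both positive sides and rearranging produces the quadratic $u^2+(2C_\kappa^2+r^2C_0)u+(C_\kappa^4-r^2)<0$ in $u=\epsilon/C_0$; its discriminant simplifies to $r^2(r^2C_0^2+4C_\kappa^2C_0+4)$, and its positive root, multiplied back by $C_0$, gives exactly $\epsilon<\tilde\epsilon_{\sup}$. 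Evaluating the same inequality at the endpoint $\epsilon=\tilde\epsilon_{\min}$ — where $m=d$ and hence $\bar S=1+C_\kappa^2+C_\kappa^2 t/\sqrt d$ — shows that $\tilde\epsilon_{\min}<\tilde\epsilon_{\sup}$ holds precisely when $r>(1+C_\kappa^2+C_\kappa^2 t/\sqrt d)/\sqrt{1-\tilde\epsilon_{\min}}$.

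Finally, on the good event and for any $\epsilon\in[\tilde\epsilon_{\min},\tilde\epsilon_{\sup})$, the interval $[\bar S(m,d,t)\gamma_{\min},\sqrt{1-\epsilon}\gamma_{\max})$ is nonempty and contained in $[\hat\gamma_{\min},\hat\gamma_{\max})$, so $\hat\gamma_{\max}/\hat\gamma_{\min}>1$ and the first part of Theorem \ref{thm: WSON} applied to $\{\Pi\mathbf{a}_i\}$ shows that $\hat\phi_{\hat\gamma}$ perfectly recovers $\mathcal V$ for every $\hat\gamma$ in that interval; the coarsening statement is identical, with $r_2$, $\gamma_{\max2}$, $\tilde\epsilon_{\sup2}$ and the second part of Theorem \ref{thm: WSON}. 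The main obstacle is this last algebraic step — keeping the inequality directions straight through the squaring and checking that the stated closed forms $\tilde\epsilon_{\sup}$, $\tilde\epsilon_{\sup2}$ and the thresholds on $r$, $r_2$ really are the roots, respectively boundary cases, of the quadratic; the probabilistic ingredients, by contrast, are immediate from the already-established Proposition \ref{prop: union bound DJL} and Lemma \ref{thm: extreme singular subgaussian}.
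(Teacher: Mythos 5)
Your proposal is correct and follows essentially the same route as the paper's proof: the DJL union bound over the $O(K^2)$ centroid differences controls $\hat{\gamma}_{\max}$ (and $\hat{\gamma}_{\max2}$), the spectral-norm bound $s_1(\Pi)\leq\bar{S}(m,d,t)$ controls $\hat{\gamma}_{\min}$, and the identity $\bar{S}(m,d,t)=C_0^{-1}\epsilon+C_{\kappa}^{2}$ reduces nonemptiness of $[\bar{S}(m,d,t)\gamma_{\min},\sqrt{1-\epsilon}\gamma_{\max})$ to the same quadratic whose positive root is $\tilde{\epsilon}_{\sup}$. Your reading of the threshold on $r$ as the boundary case $m=d$ (where $\bar{S}=1+C_{\kappa}^{2}+C_{\kappa}^{2}t/\sqrt{d}$) is an equivalent, slightly more transparent form of the paper's substitution $C_0^{-1}=\tilde{\epsilon}_{\min}^{-1}+C_{\kappa}^{2}t/\sqrt{O(p\log K)}$.
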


\begin{proof}
With probability over $1-\frac{1}{K^{p-2}}-2\exp(-t^2)$, we have

\begin{itemize}
    \item[(i)] The centroids $\left\{ \Pi\mathbf{a}^{(0)}, \dots, \Pi\mathbf{a}^{(K)}\right\}$ of the embedded data are distinct.
    \item[(ii)]
     The parameters $\hat{\gamma}_{\min}$, $\hat{\gamma}_{\max}$, and $\hat{\gamma}_{\max2}$ defined in (\ref{equ: gamma_hat min &max}) satisfy the following inequalities:
    \[
    \begin{array}{c}
    \hat{\gamma}_{\min}\leq s_{1}(\Pi)\gamma_{\min}\leq\bar{S}(m,d,t)\gamma_{\min},\quad
    \sqrt{1-\epsilon}\gamma_{\max } \leq \hat{\gamma}_{\max },\quad \sqrt{1-\epsilon}\gamma_{\max2 } \leq \hat{\gamma}_{\max2}.
    \end{array}
    \]
\end{itemize}
The above implies that
\begin{equation}
\label{equ: hat gamma min&max and gamma min&max log(K)}
\begin{array}{c}
\left[\bar{S}(m,d,t)\gamma_{\min}, \sqrt{1-\epsilon}\gamma_{\max}\right)\subseteq\left[\hat{\gamma}_{\min},\hat{\gamma}_{\max}\right),\\
\left[\bar{S}(m,d,t)\gamma_{\min}, \sqrt{1-\epsilon}\gamma_{\max2}\right)\subseteq\left[\hat{\gamma}_{\min},\hat{\gamma}_{\max2}\right).
\end{array}
\end{equation}

Now, we prove the first part of the theorem. We claim here that it is sufficient to show: if $r>\frac{1+C_{\kappa}^{2}+\frac{ C_{\kappa}^{2}t}{\sqrt{d}}}{\sqrt{1-\tilde{\epsilon}_{\min}}}$, then  $\tilde{\epsilon}_{\min}<\tilde{\epsilon}_{\sup}$, and for any $
\epsilon\in[\tilde{\epsilon}_{\min},\tilde{\epsilon}_{\sup})$, the interval $\left[\bar{S}(m,d,t)\gamma_{\min},\sqrt{1-\epsilon}\gamma_{\max }\right)$ is nonempty. In fact, if $\left[\bar{S}(m,d,t)\gamma_{\min},\sqrt{1-\epsilon}\gamma_{\max }\right)$ is nonempty, then by the first inclusion of (\ref{equ: hat gamma min&max and gamma min&max log(K)}), $\left[\hat{\gamma}_{\min},\hat{\gamma}_{\max}\right)$ is nonempty. Applying Theorem \ref{thm: WSON} to the embbedded data $\Pi A$ implies that for any
$\hat{\gamma} \in \left[\hat{\gamma}_{\min},\hat{\gamma}_{\max}\right)$, the map $\hat{\phi}_{\hat{\gamma}}$ perfectly recovers $\mathcal{V}$.

On the one hand, by definition of $\tilde{\epsilon}_{\min}$ and $C_{0}$, we have
$\frac{1}{\sqrt{d}}=\frac{\tilde{\epsilon}_{\min}}{\sqrt{O(p\log(K))}}$, and
$C_{0}^{-1}=\tilde{\epsilon}_{\min}^{-1}+\frac{ C_{\kappa}^{2}t}{\sqrt{O(P\log(k))}}$.
As a result,
\[
\begin{array}{lcl}
r>\frac{C_{\kappa}^{2}+1+\frac{ C_{\kappa}^{2}t}{\sqrt{d}}}{\sqrt{1-\tilde{\epsilon}_{\min}}} &\implies& r>\frac{C_{\kappa}^{2}+\tilde{\epsilon}_{\min}\tilde{\epsilon}_{\min}^{-1}+\frac{\tilde{\epsilon}_{\min} C_{\kappa}^{2}t}{\sqrt{O(P\log(k))}}}{\sqrt{1-\tilde{\epsilon}_{\min}}}
\\ & \implies &
r>\frac{C_{\kappa}^{2}+\tilde{\epsilon}_{\min}\left(\tilde{\epsilon}_{\min}^{-1}+\frac{ C_{\kappa}^{2}t}{\sqrt{O(P\log(k))}}\right)}{\sqrt{1-\tilde{\epsilon}_{\min}}}
\\
& \implies & r>\frac{C_{\kappa}^{2}+\tilde{\epsilon}_{\min}C_{0}^{-1}}{\sqrt{1-\tilde{\epsilon}_{\min}}}
\\
& \implies &
C_{\kappa}^{2}+\tilde{\epsilon}_{\min}C_{0}^{-1}<\sqrt{1-\tilde{\epsilon}_{\min}}r
\\
& \implies & \left( C_{0}^{-1}\tilde{\epsilon}_{\min}+C_{\kappa}^{2}\right)^{2}<(1-\tilde{\epsilon}_{\min})r^{2}
\\
& \implies & \tilde{\epsilon}_{\min}^{2}+C_{0}
\left(2C_{\kappa}^{2}+r^{2}C_{0}\right)\tilde{\epsilon}_{\min}+C_{0}^{2}\left(-r^{2}+C_{\kappa}^{4}\right)<0.
\end{array}
\]
In other words, $\tilde{\epsilon}_{\min}$ satisfies the following inequality
\[
\begin{array}{c}
x^{2}+C_{0}
\left(2C_{\kappa}^{2}+r^{2}C_{0}\right)x+C_{0}^{2}\left(-r^{2}+C_{\kappa}^{4}\right)<0.
\end{array}
\]
It is not difficult to check the solutions to the above inequality is $x\in(x_{1},x_{2})$,
where
\[
\begin{array}{c}
x_{1}=-rC_0\sqrt{\frac{r^2C_0^2}{4}+C_{\kappa}^{2}C_0+1}-C_{\kappa}^{2}C_0-\frac{r^2C_0^2}{2}<0,\\
x_{2}=rC_0\sqrt{\frac{r^2C_0^2}{4}+C_{\kappa}^{2}C_0+1}-C_{\kappa}^{2}C_0-\frac{r^2C_0^2}{2}\in(0,1).
\end{array}
\]
One may realize that $x_{2}=\tilde{\epsilon}_{\sup}$.
This implies that $\tilde{\epsilon}_{\min}<\tilde{\epsilon}_{\sup}$.

 On the other hand, we have
\[
\begin{array}{lcl}
\epsilon\in[\tilde{\epsilon}_{\min},\tilde{\epsilon}_{\sup})\subseteq (x_{1},\tilde{\epsilon}_{\sup})& \implies & \epsilon^{2}+C_{0}
\left(2C_{\kappa}^{2}+r^{2}C_{0}\right)\epsilon+C_{0}^{2}\left(-r^{2}+C_{\kappa}^{4}\right)<0
\\
& \implies & \left( C_{0}^{-1}\epsilon+C_{\kappa}^{2}\right)^{2}<(1-\epsilon)r^{2}
\\
& \implies & \left( C_{0}^{-1}\epsilon+C_{\kappa}^{2}\right)<\sqrt{1-\epsilon}r
\\
& \implies & \left( C_{0}^{-1}\epsilon+C_{\kappa}^{2}\right)\gamma_{\min}<\sqrt{1-\epsilon}\gamma_{\max}
\\
& \implies & \left(\frac{\sqrt{d}+C_{\kappa}^{2}t}{\sqrt{O(p\log(K)}}\epsilon+C_{\kappa}^{2}\right)\gamma_{\min}<\sqrt{1-\epsilon}\gamma_{\max}
\\
& \implies & \left(\frac{\sqrt{d}+C_{\kappa}^{2}t}{\sqrt{m}}+C_{\kappa}^{2}\right)\gamma_{\min}<\sqrt{1-\epsilon}\gamma_{\max}
\\
& \implies & \bar{S}(m,d,t)\gamma_{\min}<\sqrt{1-\epsilon}\gamma_{\max }.
\end{array}
\]

Thus, we have proved the first part of the theorem. The second part of the theorem can be proved in a similar way.
\end{proof}

\begin{remark}
Here, we want to make some remarks on the obtained results.

\begin{enumerate}
    \item The embedding dimension in Theorem \ref{thm_PSON3} is independent of the number of data points $n$, which is important for clustering an extremely large number of data points.

    \item The results of this theorem and Theorem \ref{thm_PSON2} further demonstrate that the ratio $r = \gamma_{\max}/\gamma_{\min}$ is a data scale-invariant measure to characterize the difficulty of clustering a given collection of data. Since the embedding dimension of the JL lemma depends on O($\epsilon^{-2}$) and $\epsilon \in (0, 1)$, the value $\tilde{\epsilon}_{\min}$ (and $\epsilon_{\min}$) can be interpreted as the lowest possible dimension reduction ratio obtained by the JL lemma. Since the JL lemma is optimal if the $\epsilon$-isometry mapping is linear, thus, the condition $r>\frac{1+C_{\kappa}^{2}+\frac{ C_{\kappa}^{2}t}{\sqrt{d}}}{\sqrt{1-\tilde{\epsilon}_{\min}}}$ in Theorem \ref{thm_PSON3} (and $r > \sqrt{\frac{1 + \epsilon_{\min}}{1 - \epsilon_{\min}}}$ in Theorem \ref{thm_PSON2}) shows that the dimension reduction results obtained in this paper are intrinsically depending on the difficulties of clustering the input data.

    \item For the K-means model, \citet{JL-Kmeans15} proved that the cost can be preserved up to a $(9 + \epsilon)$ approximation bounds if the embedding dimension $m = O(\epsilon^{-2}\log K)$. This bound has been improved to $(1 + \epsilon)$ if the embedding dimension is $m = O(\epsilon^{-2}\log(K/\epsilon))$ \citep{k-means-improved}. However, it is still unknown whether the randomly projected K-means model can preserve the cluster membership assignments or not.
\end{enumerate}
\end{remark}

\section{Numerical Experiments}
\label{sec: numerical results}
In this section, we present extensive numerical experiment results to show the practical performance of our model (\ref{model: PSON}). We first consider high-dimensional data randomly generated from a mixture of spherical Gaussians $\mathcal{N}(\boldsymbol{\mu}_{k},\sigma_{k}^{2}I_{d})$  with $K$ distinct means $\boldsymbol{\mu}_1, \ldots, \boldsymbol{\mu}_K \in \mathbb{R}^d $.

In the realization of dimension reduction, by default, we randomly sample $\Pi\in\mathbb{R}^{m\times d}$ through
\begin{equation}
\label{equ: Pi Gaussians}
\Pi=\frac{1}{\sqrt{m}}G \in \mathbb{R}^{m \times d},
\end{equation}
where $G_{ij}$ are sampled from i.i.d. standard normal distribution, and $m=ceil(C\epsilon^{-2}\log(n))$. Here, $C>0$ is a constant, and $\epsilon\in(0,1)$ is the distortion parameter, which will be specified in the experiments.

In this section, we will set the weights of the convex clustering model as follows:
\begin{equation}
\label{equ: weight-setting}
w_{i j}= \begin{cases}\exp (-\phi\left\|\mathbf{a}_i-\mathbf{a}_j\right\|^2) & \text { if }(i, j) \in \mathcal{E}, \\ 0 & \text { otherwise },\end{cases}
\end{equation}
where $\mathcal{E} := \{(i, j) ~\mid~ \mbox{if $\mathbf{a}_i$ (or $\mathbf{a}_j$) is in $\mathbf{a}_j$'s (or $\mathbf{a}_i$'s) k-nearest neighbors}, 1 \leq i \neq j \leq n \}$. We set $\phi=\frac{1}{d}$ by default to rescale the weights and $k$ will be specified in the experiments.

We adopt the semismooth Newton based augmented Lagrangian method (SSNAL) \citep{sun2021convex}, which is a state-of-the-art algorithm for solving models (\ref{model: WSON}) and (\ref{model: PSON}). We adopt the duality gap as the stopping criterion (see \citep{yuan2021dimension} for details) with a tolerance $\epsilon_{\rm tol} = 10^{-6}$.

We organize our numerical experiment results as follows: In Section \ref{sec: numerical-sec-1}, we first justify the quality of the random projection matrix $\Pi$ for preserving the pairwise distances for the data points and centroids. After that, we verify the recovery guarantees of the model (\ref{model: PSON}). We further compare the cluster recovery performance of the model (\ref{model: PSON}) to the randomly projected K-means model (RP K-means). In Section \ref{sec: numerical-verification-logk}, we will numerically demonstrate that the embedding dimension can be $O(\epsilon^{-2}\log(K))$. In Section \ref{sec: numerical-sec2}, we test the robustness of the model (\ref{model: PSON}) with different problem scales and embedding dimensions.  Lastly, we test the performance of the model on real data in Section \ref{sec: numerical-true}.

\subsection{Numerical Verification for the Randomly Projected Convex Clustering Model with $m = O(\epsilon^{-2}\log(n))$}
\label{sec: numerical-sec-1}
In this section, we verify the theoretical performance of the model (\ref{model: PSON})  by conducting numerical experiments on one simulated balanced Gaussian data $A\in\mathbb{R}^{2000\times 1000}$. Data $A$ is generated from a mixture of $K=20$ spherical Gaussians $\mathcal{N}(\mathbf{e}_k,0.005I_{2000})$ with equal probability $w_k=\frac{1}{20}$, for all $k=1,\ldots,20$. Here, $\mathbf{e}_k \in \mathbb{R}^{2000}$ is the $k$-th column of the identity matrix $I_{2000}$. Note that we know the true cluster assignments of the simulated data. Let $X_A=\{\mathbf{a}_{i}-\mathbf{a}_{j} ~|~ 1\leq i<j\leq n\}$, $X_{\alpha}=\{\mathbf{a}_{i}-\mathbf{a}_{j} ~|~ i,j\in I_{\alpha},i\neq j\},\alpha=1,..., K$, and $X_{\mathcal{C}(A)}=\{\mathbf{a}^{(\alpha)}-\mathbf{a}^{(\beta)} ~|~ 1\leq \alpha<\beta\leq K\}$. Let $X_{\mathcal{V}}=\cup_{\alpha=1}^{20}{X_{\alpha}}$. The size of $X_A$ is $\mathcal{C}(1000,2)=499500$, the size of $X_{\mathcal{V}}$ is $\sum_{\alpha=1}^{20}\mathcal{C}(n_{\alpha},2)=24926$, and the size of $X_{\mathcal{C}(A)}$ is $\mathcal{C}(20,2)=210$. The visualization of this data set is in Figure \ref{fig: MSGd2000n1000K20sigma}. For all the visualizations of the high-dimensional data points in this paper, we adopt the t-SNE \citep{van2008visualizing} to project them to $\mathbb{R}^3$. Motivated by the assumptions of the recovery guarantees, we will set the weights $w_{i j}$ as (\ref{equ: weight-setting}) with a graph
\begin{equation}
\label{equ: weight graph for A}
\begin{aligned}
\mathcal{E}_{A} := &\cup_{i=1}^{1000}\{(i, j) ~\mid~ \mbox{if $\mathbf{a}_i$ (or $\mathbf{a}_j$) is in $\mathbf{a}_j$'s (or $\mathbf{a}_i$'s) 20-nearest neighbors}, 1 \leq i \neq j \leq 1000 \}\\&\cup_{\alpha=1}^{20}\{(i, j) ~\mid~ i, j \in I_\alpha, i\neq j\}.
\end{aligned}
\end{equation}

\subsubsection{Quality of the Random Projection Matrix}

We will verify the robustness of $\Pi$ for pair-wise distance preservation. For this purpose, we will generate the projection matrices $\Pi$ following (\ref{equ: Pi Gaussians}) with $m=\operatorname{ceil}(9\epsilon^{-2}\log(1000))$ and $\epsilon\in\{0.2,0.4,0.6,0.8,0.95\}$. In other words, we will test the random projection matrices with $m\in\{1555,389,173,98,69\}$. We first randomly generate a projection matrix and visualize the embedded data for each $m$ in Figure \ref{fig: MSGd2000n1000K20eps02C9m1555}, \ref{fig: MSGd2000n1000K20eps04m389}, \ref{fig: MSGd2000n1000K20eps06m173}, \ref{fig: MSGd2000n1000K20eps08_m98}, and \ref{fig: MSGd2000n1000K20eps095m69}, respectively.
\begin{figure}[tbhp]
\centering
\subfloat[$d=2000$]{\label{fig: MSGd2000n1000K20sigma}\includegraphics[width=0.33\textwidth]{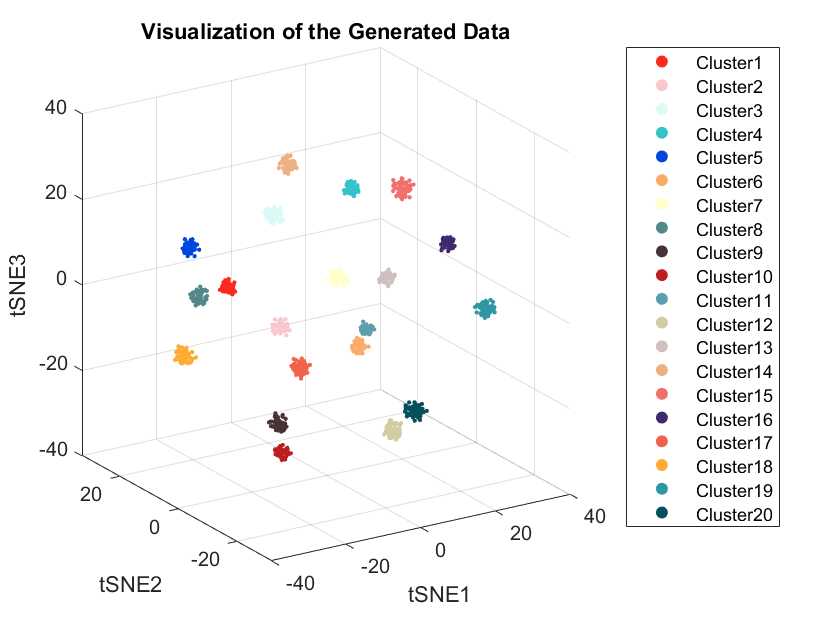}}
\subfloat[ $m=1555$ ($\epsilon= 0.2$)]{\label{fig: MSGd2000n1000K20eps02C9m1555}\includegraphics[width=0.33\textwidth]{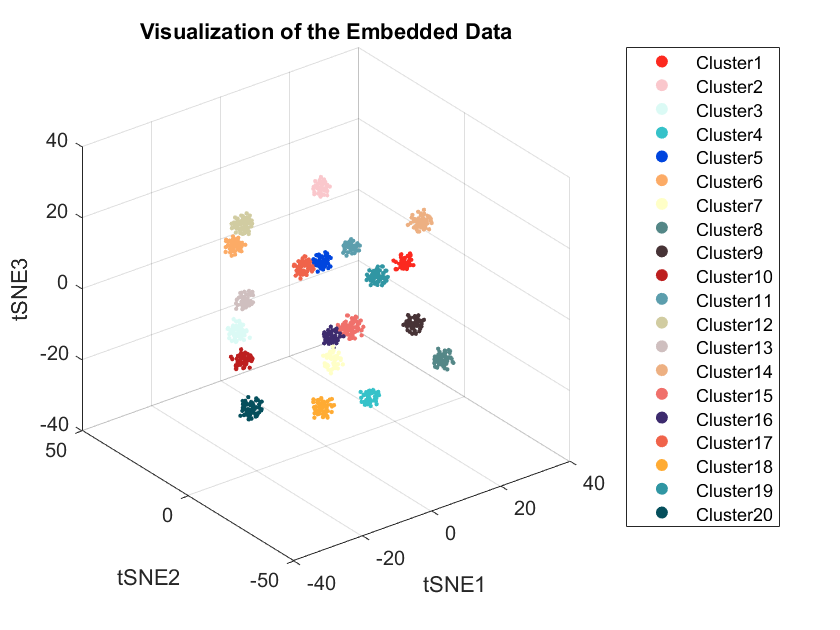}}
\subfloat[$m=389$ ($\epsilon= 0.4)$]{\label{fig: MSGd2000n1000K20eps04m389}\includegraphics[width=0.33\textwidth]{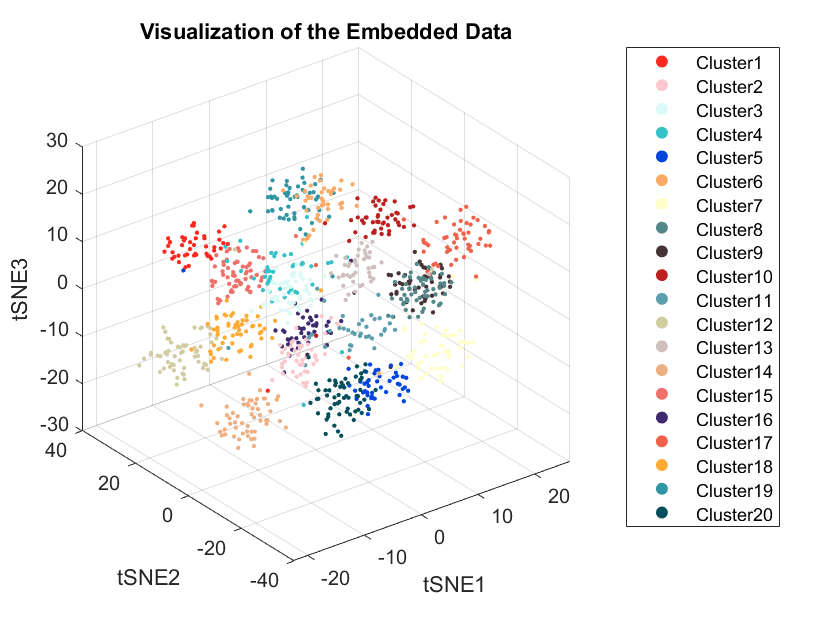}}

\subfloat[$m=173$ ($\epsilon= 0.6)$]{\label{fig: MSGd2000n1000K20eps06m173}\includegraphics[width=0.33\textwidth]{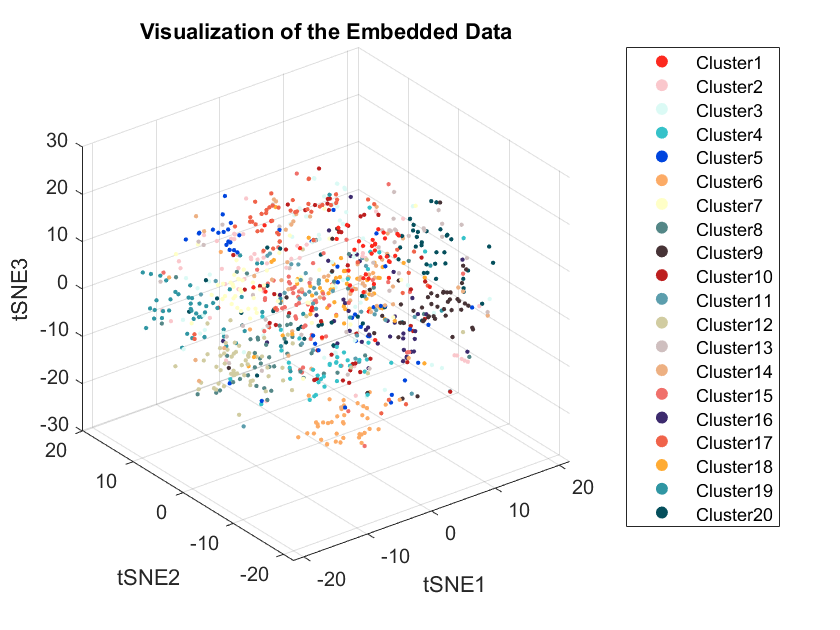}}
\subfloat[$m=98$ ($\epsilon= 0.8)$]{\label{fig: MSGd2000n1000K20eps08_m98}\includegraphics[width=0.33\textwidth]{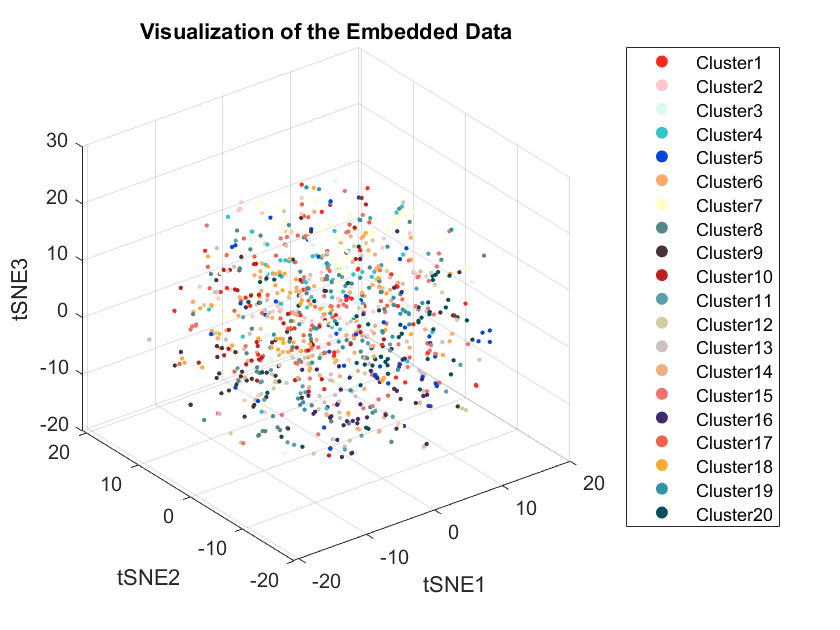}}
\subfloat[$m=69$ ($\epsilon= 0.95)$]{\label{fig: MSGd2000n1000K20eps095m69}\includegraphics[width=0.33\textwidth]{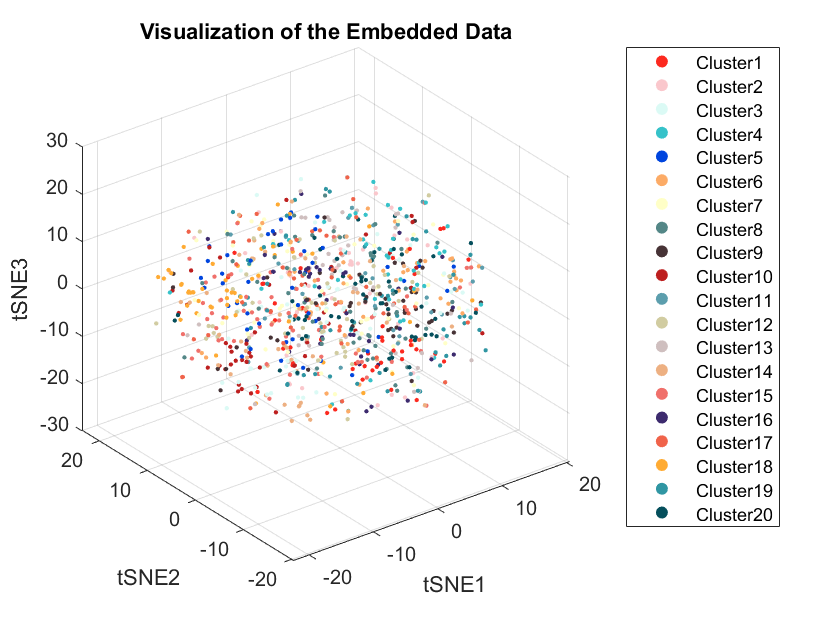}}
\caption{Visualization for $A$ and five embedded data.}
\label{Fig:view_MSG and its embeddding}
\end{figure}
From the figures, one may see that as the distortion parameter $\epsilon$ increases (in other words, $m$ decreases), different clusters in the embedded data become less separate, which is intuitive. Moreover, we can observe that the random projection matrix can preserve the pairwise distances structure of the input data $A$ very well if we set a relatively small distortion parameter $\epsilon$. To further demonstrate the robustness, we will randomly generate 1000 independent samples of the random projection matrix $\Pi$ for every parameter setting, and test the successful probability of the squared-norm preservation of the points in the sets $X_A$, $X_{\mathcal{V}}$, and $X_{\mathcal{C}(A)}$ within the desired distortion $\epsilon$. The results are summarized in Table \ref{tab: check for pwd}. The results demonstrate the robustness of the random projection matrices for pair-wise distance preservation. On the one hand, the square-norm can be preserved for almost all points (with a percentage over $99.999\%$). On the other hand, the success rate for a random projection matrix to preserve the square-norm for all the points in $X_{\mathcal{V}}$ and the centroids $X_{\mathcal{C}}$ are very high.

\begin{table}[tbhp]
\caption{The numerical performance of the random projection matrix $\Pi$ for preserving the squared norm of the points in $X_A$, $X_{\mathcal{V}}$, and $X_{\mathcal{C}(A)}$ within the desired distortion. In the table, $p_{X_A}$, $p_{X_{\mathcal{V}}}$, and $p_{X_{\mathcal{C}(A)}}$ are the successful probability for preserving the squared norm of all the points. $X_A\%$, $X_{\mathcal{V}}\%$, and $X_{\mathcal{C}(A)}\%$ are the average percentage of the points whose squared norm are preserved within the desired distortion.}
\label{tab: check for pwd}
{\footnotesize
\begin{center}
\begin{tabular}{|c|c|c|c|c|c|c|} \hline
 \bf{Dimension (distortion)} &$p_{X_A}$& $X_A\%$&$p_{X_{\mathcal{V}}}$& $X_{\mathcal{V}}\%$& $p_{X_{\mathcal{C}(A)}}$& $X_{\mathcal{C}(A)}\%$ \\ \hline
$m=1555\quad (\epsilon= 0.2$) &  950/1000 & 99.999\% & 1000/1000 &99.999\% & 1000/1000 &100\%\\\hline
$m=389\quad (\epsilon= 0.4$) &  855/1000 & 99.999\%  & 993/1000 & 99.999\% & 1000/1000 &100\%\\\hline
$m=173\quad (\epsilon= 0.6$) &  705/1000  & 99.999\% & 982/1000 & 99.999\%& 1000/1000&100\%\\\hline
$m=98\quad (\epsilon= 0.8$) &  501/1000& 99.999\% & 951/1000& 99.999\%& 1000/1000 &100\%\\\hline
$m=69\quad (\epsilon= 0.95$) &  248/1000 & 99.999\%& 907/1000& 99.999\% & 1000/1000 &100\%\\\hline
\end{tabular}
\end{center}
}
\end{table}
\subsubsection{Verification of the Recovery Guarantees of the Randomly Projected Convex Clustering Model}
Next, we will verify the recovery guarantees of the model (\ref{model: PSON}) established in Theorem \ref{thm_PSON2}. Since the effectiveness of the random projection matrix $\Pi$ for pair-wise distance preservation has already been verified, now, we will randomly sample a projection matrix $\Pi$ for each $m$ in the experiments described below. We first compute the upper bound $\gamma_{\max}$ and the lower bound $\gamma_{\min}$ of $\gamma$ defined by (\ref{equ: gamma min &max}) and their ratio $r=\frac{\gamma_{\max}}{\gamma_{\min}}$ on the original data $A$. The values are $$\gamma_{\min}=0.1620,\quad\gamma_{\max}=1.2474,\quad r=7.6985,$$
which imply that the model (\ref{model: WSON}) with our designed weights $w_{ij}$ can perfectly recover the true cluster membership of $A$ for any $\gamma\in[0.1620,1.2474)$. The large ratio $r$ implies the feasibility of the model (\ref{model: PSON}) with some suitable $\epsilon\in(0,1)$ under the same weights $w_{ij}$. We then estimate the values $\epsilon_{\min}$ and $\epsilon_{\sup}$ defined in Theorem \ref{thm_PSON2}, which are
$$
\epsilon_{\min}=0.1763,\quad\epsilon_{\sup}=0.9668.
$$
The results in Theorem \ref{thm_PSON2} imply that for $0.1763\leq\epsilon<0.9668$, and ${\gamma} \in\left[\sqrt{1+\epsilon}\gamma_{\min}, \sqrt{1-\epsilon}\gamma_{\max}\right)$, the model (\ref{model: PSON}) with $m=O(9\epsilon^{-2}\log(1000))$ can perfectly recover the true cluster membership of $A$ with high probability. Here, we take $m=ceil(9\epsilon^{-2}\log(1000))$.

Since the estimated valid interval of distortions is $(\epsilon_{\min}, \epsilon_{\sup}) = (0.1763,0.9668)$, we choose $\epsilon \in \{0.2,0.4,0.6,0.8,0.95\}$ for verification. The corresponding embedding dimensions are $m \in \{1555,389,173,98,69\}$. To verify the recovery guarantees of the models (\ref{model: WSON}) and (\ref{model: PSON}), we will generate a clustering path of the model (\ref{model: WSON}) on the original data $A$ and a clustering path of the model (\ref{model: PSON}) on the embedded data for each $m$. In particular, we will generate all clustering paths with $\gamma \in [10:-0.1:0.1]$. We will compute the number of clusters $K$, the rand index, and the adjusted rand index against $\gamma$ on the clustering paths. The results are shown in Figure \ref{Fig: clustering and (adjusted) rand index path}.
\begin{figure}[tbhp]
\centering
\subfloat[Number of clusters $K$]{\label{fig: 6clustering path MSGd2000n1000K20sigma}\includegraphics[width=1\textwidth]{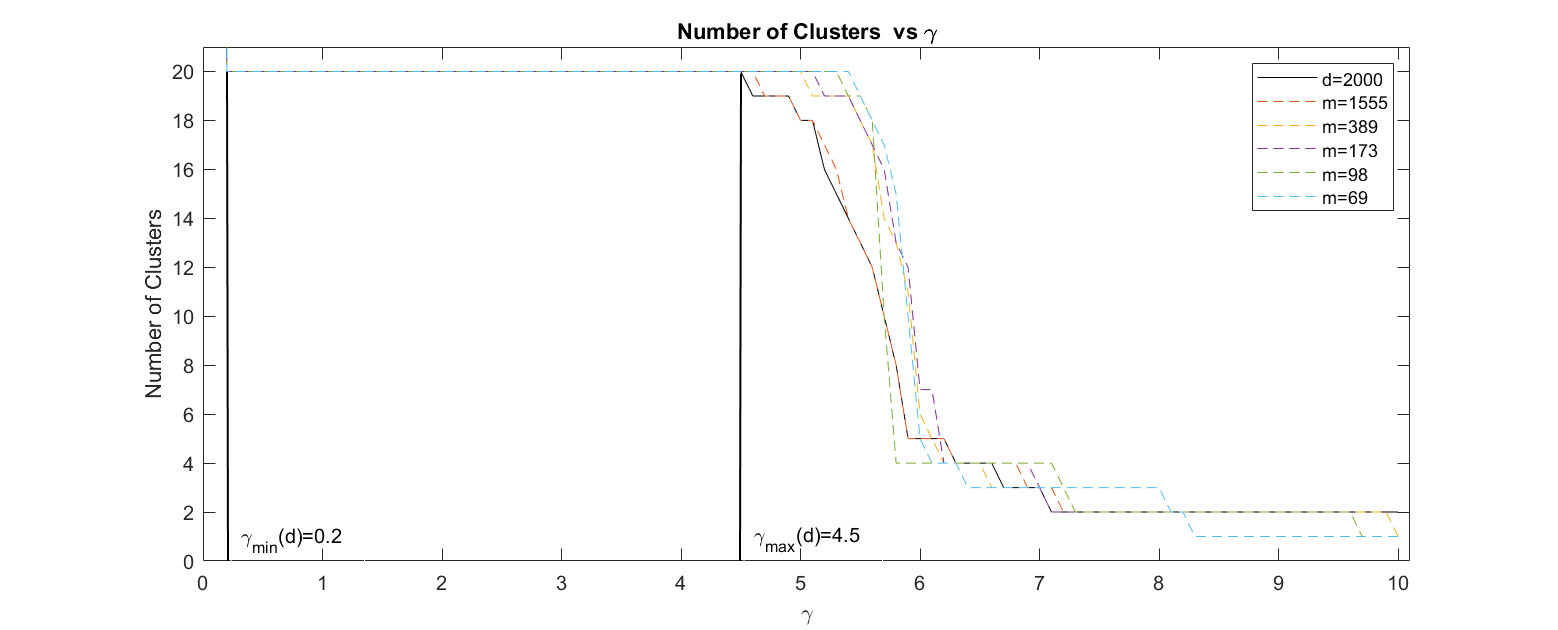}}

\subfloat[Practical upper bound of $\gamma$ for perfect recovery]{\label{fig: 6clustering path gamma max}\includegraphics[width=1\textwidth]{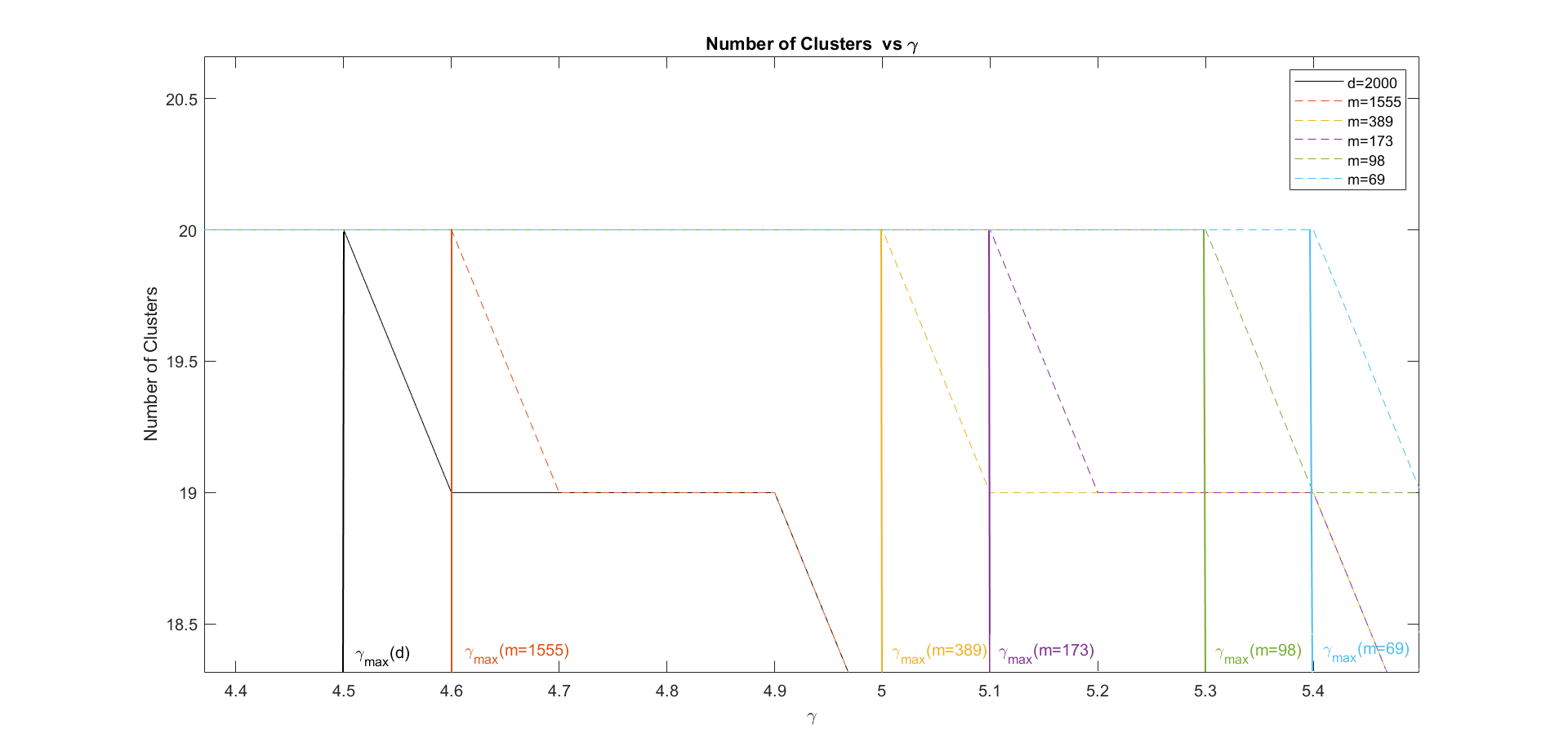}}

\subfloat[rand index on the path]{\label{fig: 6ri path MSGd2000n1000K20sigma}\includegraphics[width=0.5\textwidth]{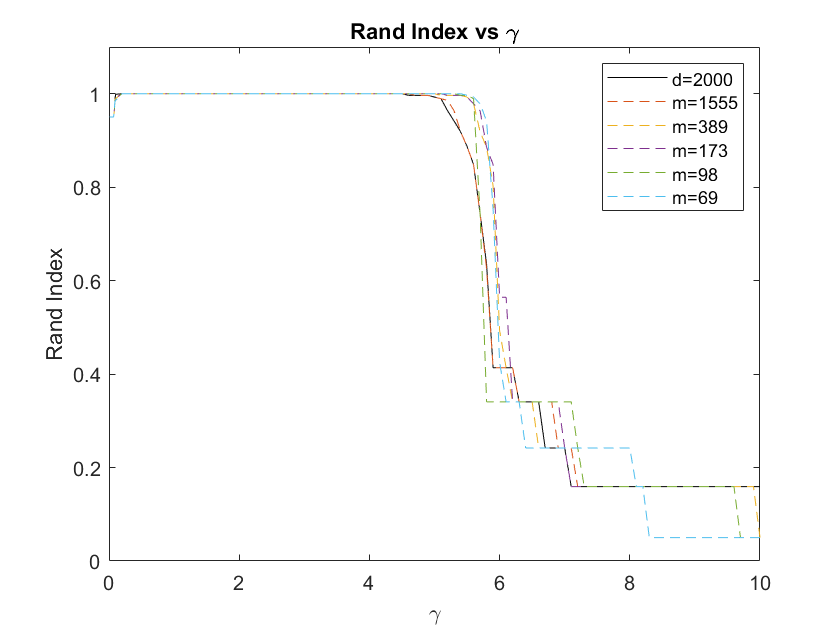}}
\subfloat[adjusted rand index on the path]{\label{fig: 6ari path MSGd2000n1000K20sigma}\includegraphics[width=0.5\textwidth]{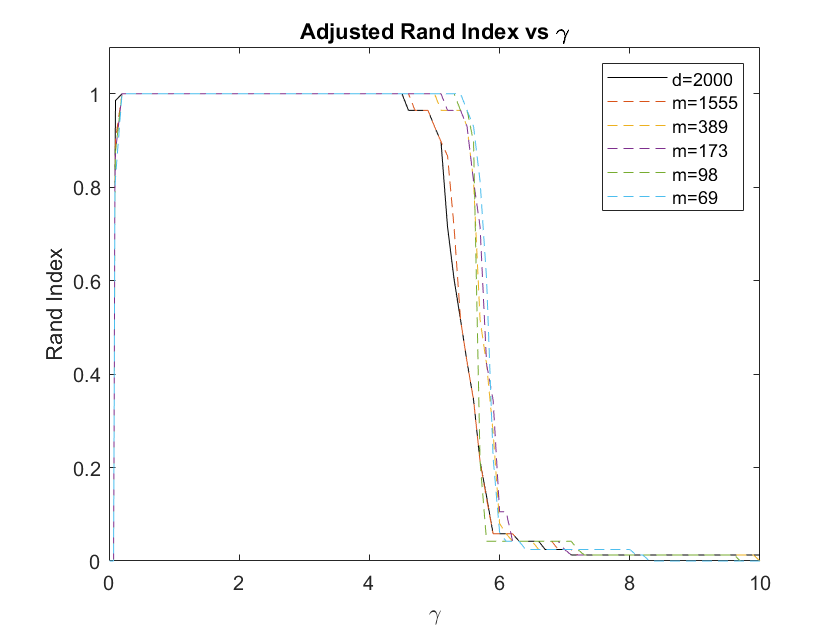}}
\caption{Clustering performance on the clustering path.}
\label{Fig: clustering and (adjusted) rand index path}
\end{figure}
To better verify the recovery guarantees, we compute the estimated range  $\left[\sqrt{1+\epsilon}\gamma_{\min},\sqrt{1-\epsilon}\gamma_{\max}\right)$ in Theorem {\ref{thm_PSON2}} for perfect recovery for different $m$ in Table \ref{tab: theoretical estimation of gamma}.
\begin{table}[tbhp]
\caption{Estimated ranges of $\gamma$ for perfect recovery guarantees of RPCCM. The range $\left[\sqrt{1+\epsilon}\gamma_{\min},\sqrt{1-\epsilon}\gamma_{\max}\right)$ is estimated using Theorem \ref{thm_PSON2} by the model (\ref{model: PSON}) and the range $\left[\hat{\gamma}_{\min},\hat{\gamma}_{\max}\right)$ defined by (\ref{equ: gamma_hat min &max}) is implicitly estimated using Theorem \ref{thm: WSON} by (\ref{model: WSON}).}
\label{tab: theoretical estimation of gamma}
{\footnotesize
\begin{center}
\begin{tabular}{|c|c|c|} \hline
 \bf{Dimension (distortion)} &  $\left[\sqrt{1+\epsilon}\gamma_{\min},\sqrt{1-\epsilon}\gamma_{\max}\right)$ &  $\left[\hat{\gamma}_{\min},\hat{\gamma}_{\max}\right)$
 \\\hline
$m=1555\quad (\epsilon= 0.2$) &   [0.1775,1.1157]& [0.1631,1.2334)
\\\hline
$m=389\quad (\epsilon= 0.4$) &   [0.1917,0.9669)&[0.1699,1.2680)
\\\hline
$m=173\quad (\epsilon= 0.6$) & [0.2049, 0.7889)&[0.1610,1.1783)
\\\hline
$m=98\quad (\epsilon= 0.8$) & [0.2174, 0.5578)&[0.1618,1.2101)
\\\hline
$m=69\quad (\epsilon= 0.95$) & [0.2263, 0.2789)&[0.1707, 1.2443)
\\\hline
\end{tabular}
\end{center}
}
\end{table}

From the results in Figure \ref{Fig: clustering and (adjusted) rand index path} and Table \ref{tab: theoretical estimation of gamma}, we can see that the model (\ref{model: PSON}) indeed performs perfect cluster recovery when $\gamma$ is chosen in the interval $\left[\sqrt{1+\epsilon}\gamma_{\min}, \sqrt{1-\epsilon}\gamma_{\max}\right)$.

In a word, the recovery guarantees of the convex clustering model (\ref{model: WSON}) on the original data $A$ can be preserved by the model (\ref{model: PSON}) with a much smaller dimension and the performance of the model (\ref{model: PSON}) is attractive in practice.

\begin{remark}
We want to add a remark here on the empirical performance of the model (\ref{model: PSON}). As shown in Table \ref{tab: theoretical estimation of gamma}, the interval $[\hat{\gamma}_{\min}, \hat{\gamma}_{\max})$ of the model (\ref{model: PSON}) for the perfect recovery can be larger. The empirical performance can be robust with respect to the embedding dimension. This can be demonstrated by the results in Figure \ref{Fig: clustering and (adjusted) rand index path}.
\end{remark}

\subsubsection{Comparison between the Randomly Projected Convex Clustering Model and the Randomly Projected K-means Model}
To further demonstrate the superior performance of the model (\ref{model: PSON}), we compare the clustering performance between the model (\ref{model: PSON}) and the RP K-means on the data $A$. Since we know the true number of clusters is $K = 20$, we compare the clustering quality of the two models for $K \in \{16,17,18,19,20\}$. More specifically, we will compare their performance in terms of the rand index and the adjusted rand index against different numbers of clusters. For the implementation of K-means and RP K-means in this paper, we use the "kmeans" package from Matlab with parameters ’MaxIter’=10000 and ’Replicates’=30. We summarize the results in Table \ref{tab: RI RPCCM vs RPKmeans} and Table \ref{tab: ARI RPCCM vs RPKmeans}.

From the results in Table \ref{tab: RI RPCCM vs RPKmeans} and Table \ref{tab: ARI RPCCM vs RPKmeans}, we can see that the performance of the model (\ref{model: PSON}) is better and more robust than RP K-means, even when the number of clusters is not correctly classified ($K\in\{16,17,18,19\}$). Neither K-means nor RP K-means can perform a perfect recovery based on our experiments, and the recovery performance of RP K-means becomes less reliable as $m$ decreases.
As a comparison, the recovery results of the model  (\ref{model: WSON}) are robustly inherited by the model (\ref{model: PSON}), and the model (\ref{model: PSON}) with all five $m$ could perform perfectly recovery on $A$ with some suitable $\gamma$ on the path. The embedding dimension is as low as $m=69$, which can greatly reduce the computational cost.

\begin{table}[tbhp]
\caption{The rand index value against the number of clusters ($K=\{16,17,18,19,20\}$) for CCM and K-means, and RPCCM and RP K-means with each $m$ on data $A$. For CCM and RPCCM, if $K$ is identified by some $\gamma$ (maybe not unique) on the clustering path, we will pick an adjusted rand index value as the record. Otherwise, if there is no $\gamma$ on the clustering path such that $K$ is identified, we will denote it by '/'.}
\label{tab: RI RPCCM vs RPKmeans}
{\footnotesize
\begin{center}
\begin{tabular}{|c|c|c|c|c|c|c|} \hline
 \bf{Model} & $K=16$ & $K=17$ & $K=18$ &$K=19$& $K=20$ \\ \hline
 \bf{CCM} ($d=2000$) & 0.9637 & / & 0.9929 & 0.9965 & \bf{1.0000}\\\hline
 \bf{RPCCM} ($m=1555$) & 0.9637 & 0.9857 & 0.9929 & 0.9965 & \bf{1.0000}\\\hline
\bf{RPCCM} ($m=389$) &  /& 0.9786 & 0.9929 & 0.9965 & \bf{1.0000}\\\hline
\bf{RPCCM} ($m=173$) & 0.9637 & 0.9786 & 0.9929 & 0.9965 &\bf{1.0000}\\\hline

\bf{RPCCM} ($m=98$) & / & / & 0.9893 & 0.9965 & \bf{1.0000}\\\hline
\bf{RPCCM} ($m=69$) & / & 0.9786 & 0.9929 & 0.9965 & \bf{1.0000}\\\hline

\bf{K-means} ($d=2000$) & 0.9695 & 0.9702 & 0.9684 & 0.9733 & 0.9851 \\\hline
\bf{RP K-means} ($m=1555$) & 0.9619 & 0.9808 & 0.9804 &  0.9754 & 0.9836 \\\hline
\bf{RP K-means} ($m=389$) & 0.9620 & 0.9644 & 0.9824 & 0.9778& 0.9811\\\hline
\bf{RP K-means} ($m=173$)  & 0.9367 & 0.9401 & 0.9344  &  0.9458 & 0.9473 \\\hline
\bf{RP K-means} ($m=98$) & 0.9045 & 0.8992 & 0.9031 & 0.9122 & 0.9107 \\\hline
\bf{RP K-means} ($m=69$) & 0.8971 & 0.8995 & 0.9019 & 0.9029 & 0.9040\\\hline
\end{tabular}
\end{center}
}
\end{table}

\begin{table}[tbhp]
\caption{The adjusted rand index value against the number of clusters ($K=\{16,17,18,19,20\}$) for CCM and K-means, and RPCCM and RP K-means with each $m$ on data $A$. For CCM and RPCCM, if $K$ is identified by some $\gamma$ (maybe not unique) on the clustering path, we will pick an adjusted rand index value as the record. Otherwise, if there is no $\gamma$ on the clustering path such that $K$ is identified, we will denote it by '/'.}
\label{tab: ARI RPCCM vs RPKmeans}
{\footnotesize
\begin{center}
\begin{tabular}{|c|c|c|c|c|c|c|} \hline
 \bf{Model} & $K=16$ & $K=17$ & $K=18$ &$K=19$& $K=20$ \\ \hline
 \bf{CCM} ($d=2000$)  & 0.7154 & / & 0.9299 & 0.9645 & \bf{1.0000}\\\hline
 \bf{RPCCM} ($m=1555$)  & 0.7154 & 0.8670 & 0.9299 & 0.9645 & \bf{1.0000}\\\hline
\bf{RPCCM} ($m=389$)  & / & 0.8125 & 0.9299 & 0.9645 & \bf{1.0000}\\\hline
\bf{RPCCM} ($m=173$)  & 0.7154 & 0.8125 & 0.9299 & 0.9645 & \bf{1.0000}\\\hline
\bf{RPCCM} ($m=98$)  & / & / & 0.8975 & 0.9645 & \bf{1.0000}\\\hline
\bf{RPCCM} ($m=69$) & / & 0.8125 & 0.9299 & 0.9645 & \bf{1.0000}\\\hline

\bf{K-means} ($d=2000$) & 0.7493 & 0.7525 & 0.7355 & 0.7674 & 0.8578 \\\hline
\bf{RP K-means} ($m=1555$) & 0.6989 & 0.8284 &  0.8237 & 0.7801 & 0.8426 \\\hline
\bf{RP K-means} ($m=389$) & 0.6791 &  0.6949 & 0.8367 & 0.7901 & 0.8164\\\hline
\bf{RP K-means} ($m=173$) & 0.4669 & 0.4525 & 0.4053 & 0.4654 & 0.4964 \\\hline
\bf{RP K-means} ($m=98$) & 0.1284 & 0.0858 & 0.1007 & 0.1353 & 0.1293 \\\hline
\bf{RP K-means} ($m=69$) & 0.0585 & 0.0807 & 0.0494 & 0.0645 & 0.0759\\\hline
\end{tabular}
\end{center}
}
\end{table}

\subsection{Numerical Verification for the Randomly Projected Convex Clustering Model with $m = O(\epsilon^{-2}\log(K))$}
\label{sec: numerical-verification-logk}
In this section, we will further verify the recovery guarantees established in Theorem \ref{thm_PSON3} for the model (\ref{model: PSON}). In other words, we want to numerically verify that the embedding dimension $m$ of the model (\ref{model: PSON}) can be further improved from $O(\epsilon^{-2}\log(n))$ to $O(\epsilon^{-2}\log(K))$. For simplicity, we choose $m = ceil(10\epsilon^{-2}\log(n))$ and  $\tilde{m} = ceil(10\epsilon^{-2}\log(K))$, respectively. Here, $\epsilon\in(0,1)$ is some given distortion.

We will conduct experiments on a collection of data points $A^{\prime} := \{\mathbf{a}_{1}^{\prime}, \dots, \mathbf{a}_{10000}^{\prime}\}\subseteq\mathbb{R}^{100}$, where each $\mathbf{a}_{i}^{\prime}$ is randomly sampled from a balanced Gaussian mixture. In particular, we set $K=10$, $\boldsymbol{\mu}_{k}=\mathbf{e}_k$, $\sigma_{k}^{2}=0.1$, and $w_k=\frac{1}{10}$, for $k=1,\ldots,10$ for the Gaussian mixture. Let  $X_{\alpha}^{\prime}=\{\mathbf{a}_{i}^{\prime}-\mathbf{a}_{j}^{\prime} ~|~ i,j\in I_{\alpha},i\neq j\},\alpha=1,..., 10$, and $X_{\mathcal{C}(A)}^{\prime}={\{\mathbf{a}^{\prime}}^{(\alpha)}-{\{\mathbf{a}^{\prime}}^{(\beta)} ~|~ 1\leq \alpha<\beta\leq 10\}$, and denote $X_{\mathcal{V}}^{\prime}=\cup_{\alpha=1}^{10}{X_{\alpha}^{\prime}}$. Similarly, inspired by the assumptions in Theorem \ref{thm_PSON3}, we will set the weights $w_{i j}$ as (\ref{equ: weight-setting}) with a graph
\begin{equation}
\label{equ: weight graph A2}
\begin{aligned}
\mathcal{E}_{A^{\prime}} := &\cup_{i=1}^{10000}\{(i, j) ~\mid~ \mbox{if $\mathbf{a}_i^{\prime}$ (or $\mathbf{a}_j^{\prime}$) is in $\mathbf{a}_j$'s (or $\mathbf{a}_i^{\prime}$'s) 10-nearest neighbors}, 1 \leq i \neq j \leq 10000 \}\\&\cup_{\alpha=1}^{10}\{(i, j) ~\mid~ i, j \in I_\alpha, i\neq j\}.
\end{aligned}
\end{equation}

First, we compute the values $\gamma_{\max}$ and $\gamma_{\min}$ defined by (\ref{equ: gamma min &max}) and their ratio $r=\frac{\gamma_{\max}}{\gamma_{\min}}$ on the original data $A^{\prime}$. The values are $$\gamma_{\min}= 0.0093,\quad\gamma_{\max}= 0.0887,\quad r=9.5397,$$
which implies that the model (\ref{model: WSON}) with above weights $w_{ij}$ can perfectly recover the true cluster membership of $A^{\prime}$ for any $\gamma\in[0.0093,0.0887)$. The large ratio $r$ implies the feasibility of the model (\ref{model: PSON}) with some suitable $\epsilon\in(0,1)$ under the same weights $w_{ij}$.

Next, we will calculate the theoretically valid embedding dimensions for both cases, respectively. In order to achieve this goal, we will calculate the values $\epsilon_{\min}$ and $\epsilon_{\sup}$ defined in Theorem \ref{thm_PSON2} and the values $\tilde{\epsilon}_{\min}$ and $\tilde{\epsilon}_{\sup}$ defined in Theorem \ref{thm_PSON3}, respectively.

If we take the embedding dimension as $m = O(\epsilon^{-2}\log(n)) = ceil(10\epsilon^{-2}\log(10000))$. The values $\epsilon_{\min}$ and $\epsilon_{\sup}$ defined in Theorem \ref{thm_PSON2} on the data $A'$ are $\epsilon_{\min}=0.9597$ and $\epsilon_{\max}=0.9782$. This implies that for $\epsilon\in[0.9597,0.9782)$ and ${\gamma} \in\left[\sqrt{1+\epsilon}\gamma_{\min}, \sqrt{1-\epsilon}\gamma_{\max}\right)$, the model (\ref{model: PSON}) with the corresponding embedding dimension $m$ can perform the perfect clustering recovery on $A^{\prime}$ with high probability. The lowest possible dimension reduction ratio $\epsilon_{\min}$ is very close to $1$, which implies that we can hardly obtain a sufficient dimension reduction effect by Theorem (\ref{thm_PSON2}). In fact, the lowest possible embedding dimension guaranteed by Theorem (\ref{thm_PSON2}) is $m=ceil(10\epsilon_{\sup}^{-2}\log(10000))=97$. We will choose a valid distortion $\epsilon= 0.975\in[0.9597,0.9782)$. and test with $m=97$. We will compute the theoretically estimated interval $\left[\sqrt{1+\epsilon}\gamma_{\min},\sqrt{1-\epsilon}\gamma_{\max}\right)$ in Theorem \ref{thm_PSON2} for perfect recovery with $\epsilon= 0.975$. Then, we will randomly sample a $\Pi$ and test whether the model (\ref{model: PSON}) could perform the perfect clustering recovery for $\gamma$ in the estimated interval. Results are listed in Table \ref{tab: log(n)}.

Now, we move on to consider taking $\tilde{m}= O(\epsilon^{-2}\log(K)) =  ceil(10\epsilon^{-2}\log(10))$. For a random matrix $\Pi \in \mathbb{R}^{\tilde{m} \times d}$ defined as (\ref{equ: Pi Gaussians}), it follows from Theorem II.13 in \citep{davidson2001local} and Theorem 2.6 in \citep{rudelson2010non} that, the two-side bounds $\bar{S}(\tilde{m},d,t)$ and $\underbar{S}(\tilde{m},d,t)$ in  (\ref{equ: subgaussian singular value estimation}) are
    $$
    \bar{S}(\tilde{m},d,t)=\frac{\sqrt{100} + t}{\sqrt{\tilde{m}}} + 1=\frac{10 + t}{\sqrt{\tilde{m}}} + 1,\quad \underbar{S}(\tilde{m},d,t)=\frac{\sqrt{100}- t}{\sqrt{\tilde{m}}}-1= \frac{10 - t}{\sqrt{\tilde{m}}} - 1.
    $$
   By setting $t=2$, with a probability over $1-2\exp(-2^2)=0.9634$, we have
    $$
    \begin{array}{c}
    s_{1}(\Pi)\leq\bar{S}(\tilde{m},100,2)=\frac{12}{\sqrt{\tilde{m}}} + 1,\\
    s_{\tilde{m}}(\Pi)\geq\underbar{S}(\tilde{m},100,2)=\frac{8}{\sqrt{\tilde{m}}} -1,
    \end{array}
    $$
    and the values $\tilde{\epsilon}_{\min}$ and $\tilde{\epsilon}_{\sup}$ defined in Theorem \ref{thm_PSON3} are then estimated to be $\tilde{\epsilon}_{\min}=0.4799$ and $\tilde{\epsilon}_{\max}=0.8863$. This implies that for any $ \epsilon\in[0.4799,0.8863)$,
    and ${\gamma} \in\left[\bar{S}(\tilde{m},100,2)\gamma_{\min},\sqrt{1-\epsilon}\gamma_{\max }\right)$, the model (\ref{model: PSON}) with embedding dimension $\tilde{m}$ can perform the perfect clustering recovery of the data $A^{\prime}$ with high probability. We choose $\epsilon\in\{0.70,0.85\}$ in the valid interval $[0.4799,0.8863)$. In other words, we will test with $\tilde{m}\in\{47,32\}$. For each $\tilde{m}$, we will first randomly sample 1000 independent $\Pi$, and then test the successful probability $p_{X_{\mathcal{C}(A)}^{\prime}}$ of the squared-norm preservation of the points in the set $X_{\mathcal{C}(A)}^{\prime}$ within the desired distortion, as well as the successful probability $p_{S}$ of the two-side bounds of extreme singulars of the random projection matrices. We will then compute the estimated range $\left[\bar{S}(\tilde{m},100,2)\gamma_{\min},\sqrt{1-\epsilon}\gamma_{\max }\right)$ in Theorem \ref{thm_PSON3} for perfect recovery. Finally, we will randomly sample a random projection matrix $\Pi$ for each $\tilde{m}$ and test test whether the model (\ref{model: PSON}) could do perfect recovery with $\gamma\in\left[\bar{S}(\tilde{m},100,2)\gamma_{\min},\sqrt{1-\epsilon}\gamma_{\max }\right)$. Results are listed in Table \ref{tab: log(K)}.

\begin{table}[tbhp]
\caption{The numerical performance of the model (\ref{model: PSON})  with embedding dimension $m = O(\epsilon^{-2}\log(n))$.}
\label{tab: log(n)}
{\footnotesize
\begin{center}
\begin{tabular}{|c|c|c|c|} \hline
 \bf{Dimension (distortion)} &$\left[\sqrt{1+\epsilon}\gamma_{\min},\sqrt{1-\epsilon}\gamma_{\max}\right)$ &\bf{Perfect recovery}\\ \hline
$m=97\quad (\epsilon= 0.975$) &
$[0.0131,0.0140)$ &  $\checkmark$\\\hline
\end{tabular}
\end{center}
}
\end{table}

\begin{table}[tbhp]
\caption{The numerical performance of the model (\ref{model: PSON}) with embedding dimension $\tilde{m} = O(\epsilon^{-2}\log(K))$.}
\label{tab: log(K)}
{\footnotesize
\begin{center}
\begin{tabular}{|c|c|c|c|c|c|} \hline
 \bf{Dimension (distortion)} &  $p_{X_{\mathcal{C}(A)}^{\prime}}$ & $p_{S}$ & $\left[\bar{S}(\tilde{m},100,2)\gamma_{\min},\sqrt{1-\epsilon}\gamma_{\max }\right)$ &\bf{Perfect recovery}\\ \hline
$\tilde{m}=47\quad (\epsilon= 0.70$) & 921/1000 & 1000/1000 &$[0.0256,0.0486)$ & $\checkmark$\\\hline
$\tilde{m}=32\quad (\epsilon= 0.85$) & 915/1000& 1000/1000 & $[0.0290,0.0344)$ & $\checkmark$\\\hline
\end{tabular}
\end{center}
}
\end{table}

From the results in Table \ref{tab: log(n)} and Table \ref{tab: log(K)}, we may observe that, under the settings in this section,  we can only reduce the original dimension $d=100$ to $m=97$ theoretically if we take $m = O(\epsilon^{-2}\log(n))$. In contrast, if we take $m = O(\epsilon^{-2}\log(K))$, we can reduce the data dimension from $d=100$ to $m=32$. The above experiments demonstrate that the embedding dimension of the model (\ref{model: PSON}) can be further improved from $O(\epsilon^{-2}\log(n))$ to $O(\epsilon^{-2}\log(K))$.

\subsection{Robustness of the Randomly Projected Convex Clustering Model under Practical Settings}
\label{sec: numerical-sec2}
In this section, we will focus on further demonstrating the robustness of the model (\ref{model: PSON}) under practical settings. We will demonstrate from two perspectives: The robustness of different problem scales and embedding dimensions. First of all, we will conduct some analysis on the practical settings for (\ref{model: PSON}), in terms of weights $w_{ij}$ and the embedding dimension $m$. In terms of experiments, we will first exploit the potential of the model (\ref{model: PSON}) by choosing lower embedding dimensions on data $A$. Then, we will test on six more simulated balanced Gaussian data with different dimension $d$, size $n$, and ground-truth cluster number $K$. We will also provide numerical experiments on some unbalanced Gaussian data. The datasets are described in details later.

\subsubsection{Practical Settings of the Randomly Projected Convex Clustering Model}
Recall the settings we use in the numerical verification of the model (\ref{model: PSON}) on data $A$: 1. For weights $w_{i j}$, we choose the Gaussian kernel weights (\ref{equ: weight-setting}) with a well-designed graph (\ref{equ: weight graph for A}). 2. For the embedding dimension $m$, we set $m=ceil(9\epsilon^{-2}\log(n))$, where $\epsilon\in(0,1)$ is some desired distortion. These settings guarantee the conditions in the recovery guarantee of the model (\ref{model: PSON}): (1) $w_{i j}>0$ and $n_\alpha w_{i j}>\mu_{i j}^{(\alpha)}$ for all $i, j \in I_\alpha, \alpha\in [K]$. (2) With high probability, a random projection matrix $\Pi$ could preserve the squared norm for all the points in $X_{\mathcal{V}}$ and the centroids $X_{\mathcal{C}}$ within the desired distortion $\epsilon$.

In practical implementations of the model (\ref{model: PSON}), there are two challenges: First, we have no idea about the true cluster assignments of data. Second, computational efficiency should be taken into consideration. To overcome these challenges, we explore some robust and efficient practical settings. For the weights $w_{i j}$, since the Gaussian kernel weights (\ref{equ: weight-setting}) with a $k$-nearest neighbors graph has already demonstrated its robustness in the past literature \citep{chi2015splitting,yuan2018efficient,sun2021convex}, we simply choose the weights with a 10-nearest neighbors graph by default. We will focus more on testing the robustness regarding the embedding dimension $m$.

Although the mentioned two conditions for recovery guarantees might no longer hold in practical settings, our experimental results show that the practical performance of the model (\ref{model: PSON}) could still be robust. This motivates us to explore tighter and more general recovery guarantees of the model (\ref{model: PSON}) in further work.

\subsubsection{Robustness of the Randomly Projected Convex Clustering Model with Lower Embedding Dimensions}
We will test the robustness of the model (\ref{model: PSON}) on $A$ regarding $m$. We choose the same desired distortions $\epsilon\in\{0.2,0.4,0.6,0.8,0.95\}$ as in the previous section but set $m=ceil(\epsilon^{-2}\log(n))$. In other words, the corresponding embedding dimensions are $m\in\{173,44,20,11,8\}$, which are much lower than the previous setting with $m=ceil(9\epsilon^{-2}\log(n))$. For each $m$, we first randomly sample ten random projections $\Pi$. Then, we compute the averaged percentage of the squared norm of points that are successfully jointly preserved within the desired distortion $\epsilon$ in $X_A$, $X_{\mathcal{V}}$, and $X_{\mathcal{C}(A)}$. The results are listed in Table \ref{tab: check for pwd2}. We can observe from the results that over $93\%$ of points on average could still be preserved jointly within the desired distortion $\epsilon$. Next, we test the practical clustering performance of the model (\ref{model: PSON}) regarding all the ten randomly sampled projection matrices $\Pi$ on a clustering path generated by $\gamma=[10:-0.2:2]$. The results are summarized in Table \ref{tab: check for pwd2}. From Table \ref{tab: check for pwd2}, we can see that for each $m$, the model (\ref{model: PSON}) can perform perfect recovery robustly for all the ten randomly sampled projection matrices. These results show that the practical performance of the randomly projected convex clustering model is very robust.
\begin{table}[tbhp]
\caption{Averaged percentage of points in $X_A$, $X_{\mathcal{V}}$, and $X_{\mathcal{C}(A)}$ that the square-norm of these points can be jointly preserved by one random $\Pi$ within the desired distortion, and the recovery results.}
\label{tab: check for pwd2}
{\footnotesize
\begin{center}
\begin{tabular}{|c|c|c|c|c|} \hline
 \bf{Dimension (distortion)} & $X_A \%$ & $X_{\mathcal{V}}\%$& $X_{\mathcal{C}(A)}\%$ & \bf Perfect recovery \\ \hline
$m=173\quad (\epsilon= 0.2$) &  93.70\% &  93.72\% & 93.12\% &10/10
\\\hline
$m=44\quad (\epsilon= 0.4$) &  94.32\% & 94.44\% & 94.79\%&10/10
\\\hline
$m=20\quad (\epsilon= 0.6$) &  94.78\% & 94.85\% & 95.38\%&10/10
\\\hline
$m=11\quad (\epsilon= 0.8$) &  95.04\% & 95.09\% & 95.07\% &10/10
\\\hline
$m=8\quad (\epsilon= 0.95$) &  95.16\% & 95.26\% & 95.17\%&10/10
\\\hline
\end{tabular}
\end{center}
}
\end{table}
\subsubsection{Robustness of the Randomly Projected Convex Clustering Model with Different Problem Scales}
We will test the robustness of the model (\ref{model: PSON}) with different problem scales. We first test on balanced Gaussian data of different scales. In particular, we choose the scale $(d,n,K)\in\{(10^{2},10^{3},10), (10^{3},10^{3},10), (10^{4},10^{3},10), (10^{3},10^{3},2), (10^{3},10^{3},50), (10^{4},10^{4},50)\}$, and we set $\boldsymbol{\mu}_{k}=\mathbf{e}_k$,
$\sigma_{k}^{2}=0.005$, and $w_k=\frac{1}{K}$, for $k=1,\ldots,K$. The above six data sets are visualized in Figure \ref{Fig:view_balancedMSG}.
\begin{figure}[tbhp]
\centering
\subfloat[$(100,1000,10)$]{\label{fig: MSGd100n1000K10sigma}\includegraphics[width=0.33\textwidth]{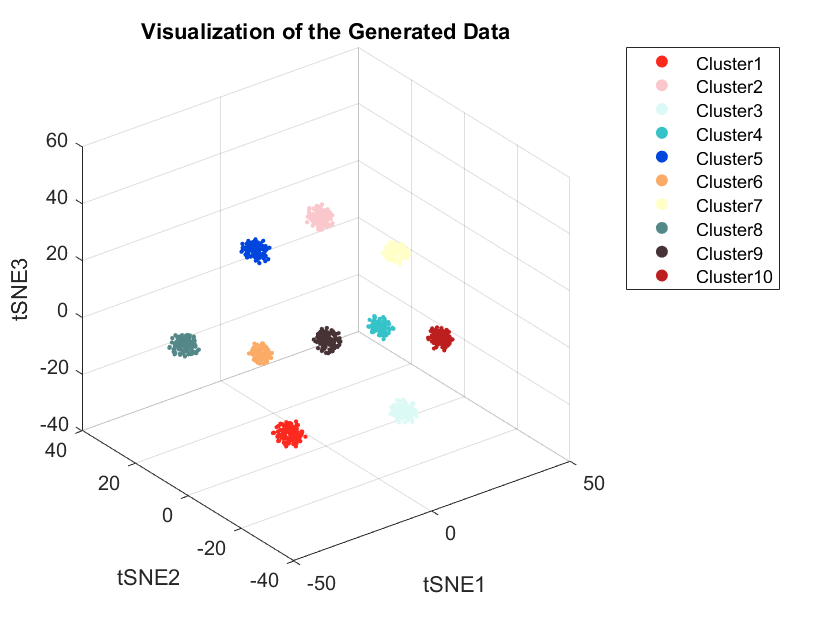}}
\subfloat[ $(1000,1000,10)$]{\label{fig: MSGd1000n1000K10sigma}\includegraphics[width=0.33\textwidth]{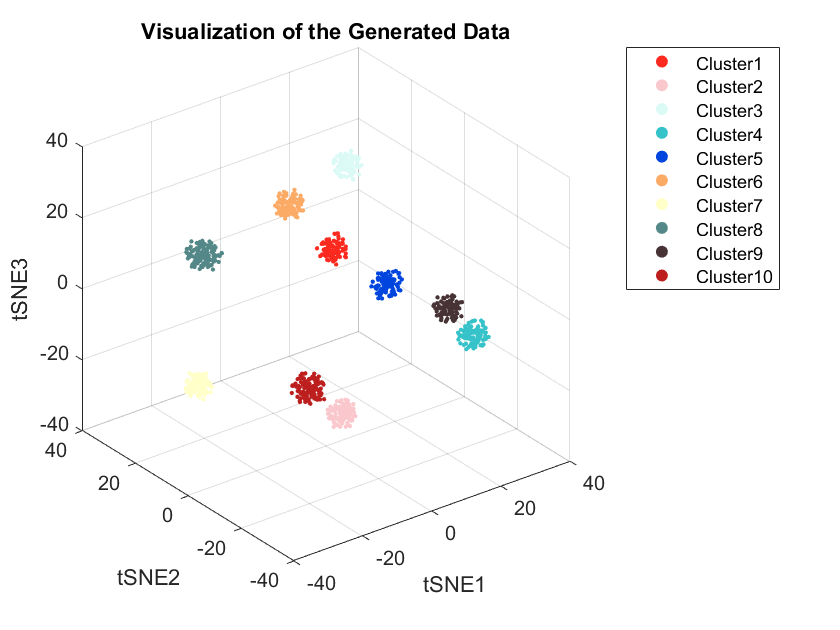}}
\subfloat[ $(10000,1000,10)$]{\label{fig: MSGd10000n1000K10sigma}\includegraphics[width=0.33\textwidth]{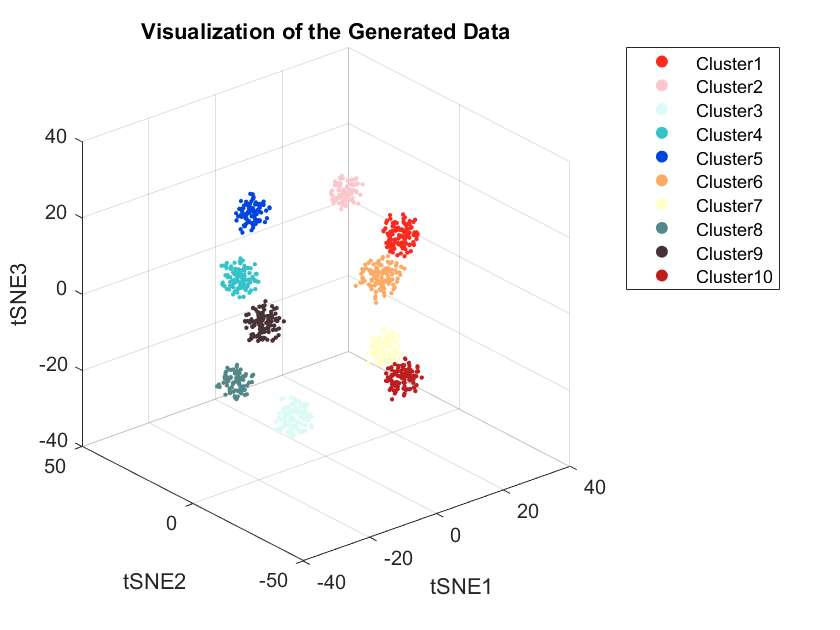}}

\subfloat[ $(1000,1000,2)$]{\label{fig: MSGd1000n1000K20sigma}\includegraphics[width=0.33\textwidth]{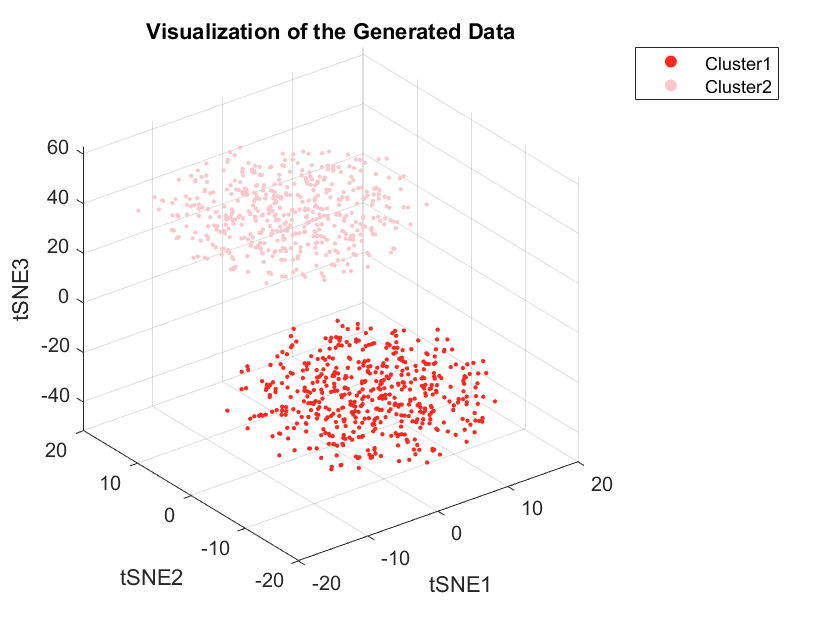}}
\subfloat[ $(1000,1000,50)$]{\label{fig: MSGd1000n1000K50sigma}\includegraphics[width=0.33\textwidth]{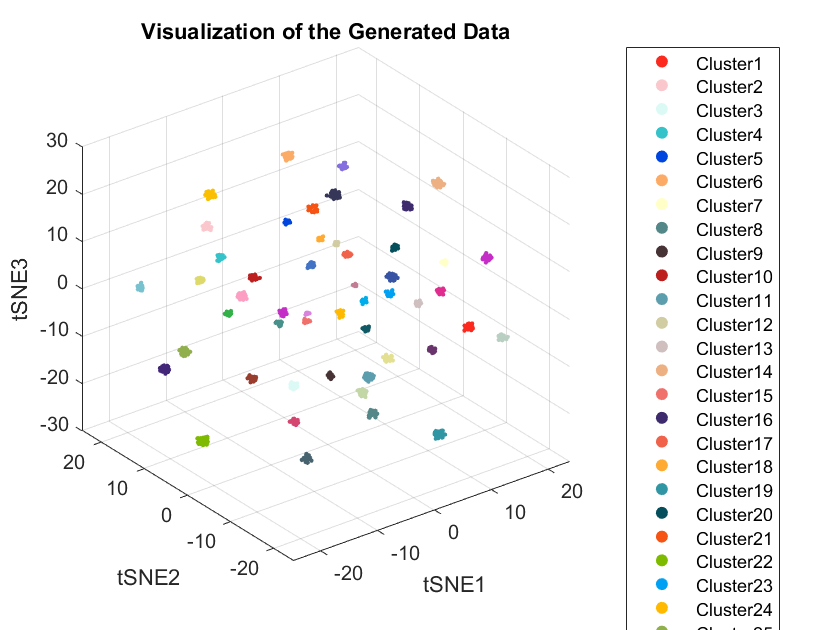}}
\subfloat[$(10000,10000,50)$]{\label{fig: MSGd10000n10000K50sigma}\includegraphics[width=0.33\textwidth]{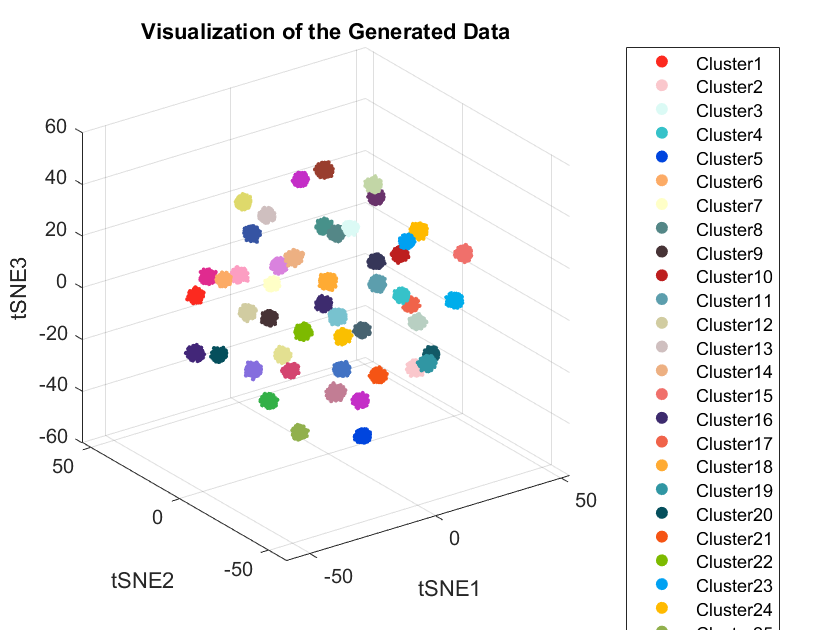}}
\caption{Visualization for six balanced Gaussian data of scale $(d,n,k)$.}
\label{Fig:view_balancedMSG}
\end{figure}

For each data, we randomly sample ten random projection matrices $\Pi\in\mathbb{R}^{m\times d}$ for every $m=10,20,50$. The clustering performance of the model (\ref{model: PSON}) along $\gamma=[10:-0.2:2]$ is summarized in Table \ref{tab: clustering performance on balanced Gaussian}. The results show that the model (\ref{model: PSON}) is robust to the scale of the data in practice.
\begin{table}[tbhp]
\caption{Clustering performance of RPCCM with  $m=10,20,50$ along $\gamma=[10:-0.2:2]$ on six balanced Gaussian data.}
\label{tab: clustering performance on balanced Gaussian}
{\footnotesize
\begin{center}
\begin{tabular}{|c|c|} \hline
 \bf{Dimension} & \bf {Perfect recovery}
 \\ \hline
$m=50$ &60/60
 \\ \hline
$m=20$ &60/60
\\\hline
$m=10$  &60/60
\\\hline
\end{tabular}
\end{center}
}
\end{table}

We also test on an unbalanced Gaussian data generated from $20$ spherical Gaussians $\mathcal{N}(\mathbf{e}_k,0.005I_{d})$ for all $k=1,\ldots, 20$, containing $7700$ samples in total. In particular, there are $2000$ samples for each of the first three clusters, and there are $100$ samples for each of the rest $17$ clusters. Again, for each $m=10,20,50$, we randomly sample ten projection matrices $\Pi\in\mathbb{R}^{m\times d}$. We then compare the clustering performance of the model (\ref{model: PSON}) and the RP $20$-means model. We generate the clustering path of the model (\ref{model: PSON}) with $\gamma=[10:-0.2:2]$. The results are summarized in Table \ref{tab: clustering performance on Ubalanced Gaussian}, which demonstrate the effectiveness and robustness of the model (\ref{model: PSON}).
\begin{figure}[tbhp]
\centering
\includegraphics[width=0.5\textwidth]{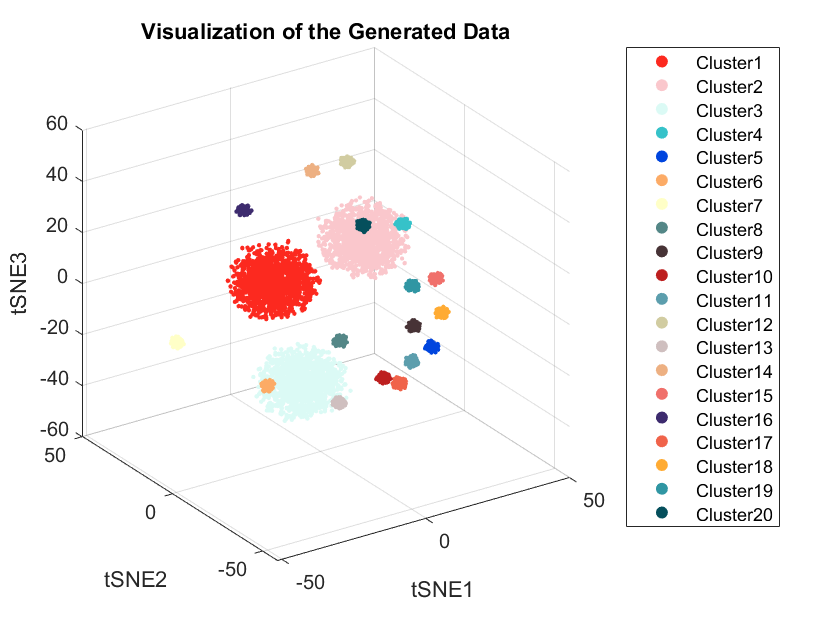}
\caption{Visualization for the unbalanced Gaussian data.}
\label{Fig:view_UbalancedMSG}
\end{figure}

\begin{table}[tbhp]
\caption{Clustering performance of RPCCM and RP $20$-means on the unbalanced Gaussian data (in terms of averaged rand index and adjusted rand index). }
\label{tab: clustering performance on Ubalanced Gaussian}
{\footnotesize
\begin{center}
\begin{tabular}{|c|c|c|c|} \hline
 \bf{Clustering model} & \bf {Perfect recovery} & \bf {rand index} & \bf {adjusted rand index}
 \\ \hline
RPCCM ($m=50$) &10/10 &\bf{1.0000}&\bf{1.0000}
 \\ \hline
RPCCM ($m=20$) &10/10&\bf{1.0000}&\bf{1.0000}
\\\hline
RPCCM ($m=10$)  &10/10 &\bf{1.0000}&\bf{1.0000}
\\\hline
RP $20$-means ($m=50$) &0/10 &0.8211&0.2343
 \\ \hline
 RP $20$-means ($m=20$) &0/10 & 0.7899&0.1061
 \\ \hline
 RP $20$-means ($m=10$) &0/10 &0.7771&0.0503
 \\ \hline
\end{tabular}
\end{center}
}
\end{table}

\subsection{Practical Performance of the Randomly Projected Convex Clustering Model on the Real Data}
\label{sec: numerical-true}
In this section, we will test the practical performance of the model (\ref{model: PSON}) on the lung cancer data \citep{lee2010biclustering}. The lung cancer data contains the microarray gene expressions of $12625$ genes for $56$ subjects belonging to one of four disease subgroups: Normal subjects (Normal), pulmonary carcinoid tumors (Carcinoid), colon metastases (Colon), and small cell carcinoma (Small Cell). In the models (\ref{model: WSON}) and (\ref{model: PSON}), we will compute the weights $w_{ij}$ following (\ref{equ: weight-setting}) with a 5-nearest neighbors graph. For the embedding dimension of the model (\ref{model: PSON}), we will set $m\in\{10,20,100,500\}$. For each $m$, we will randomly sample a random projection matrix $\Pi \in \mathbb{R}^{m \times d}$ following (\ref{equ: Pi Gaussians}). We will then test the practical performance of the models (\ref{model: WSON}) and (\ref{model: PSON}) by generating a clustering path with $\gamma\in [1:1:35]\cup [36:20:556]$. We visualize the clustering paths in Figure \ref{Fig:view_lung}.

\begin{figure}[tbhp]
\centering
\subfloat[CCM ($d=12625$)]{\label{fig:lung_d}\includegraphics[width=0.33\textwidth]{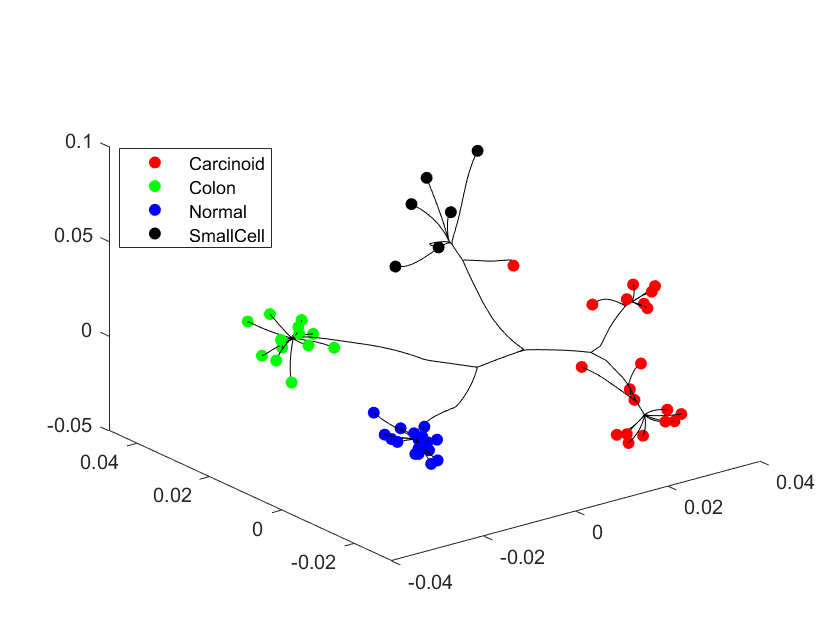}}
\subfloat[ RPCCM ($m=500$)]{\label{fig:lung_m500}\includegraphics[width=0.33\textwidth]{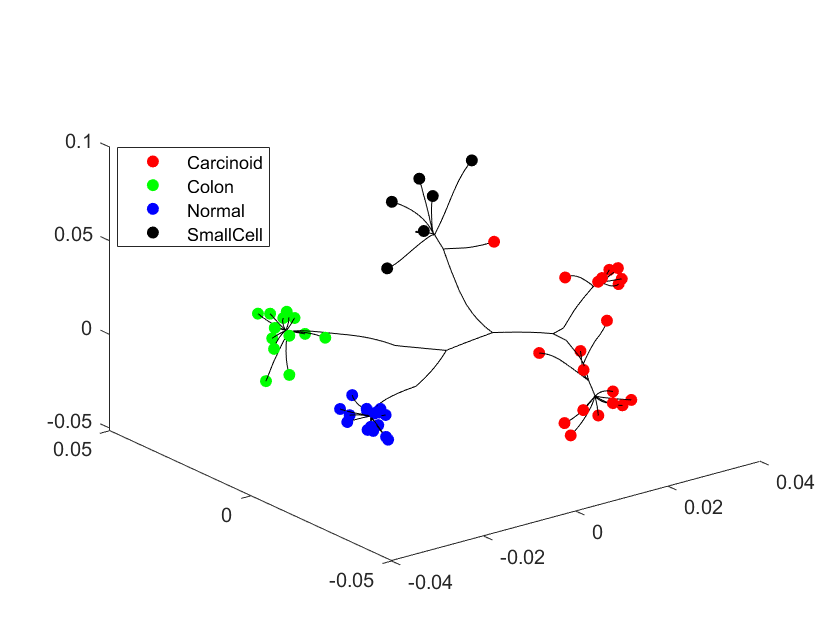}}
\subfloat[RPCCM ($m=100$)]{\label{fig:lung_m100}\includegraphics[width=0.33\textwidth]{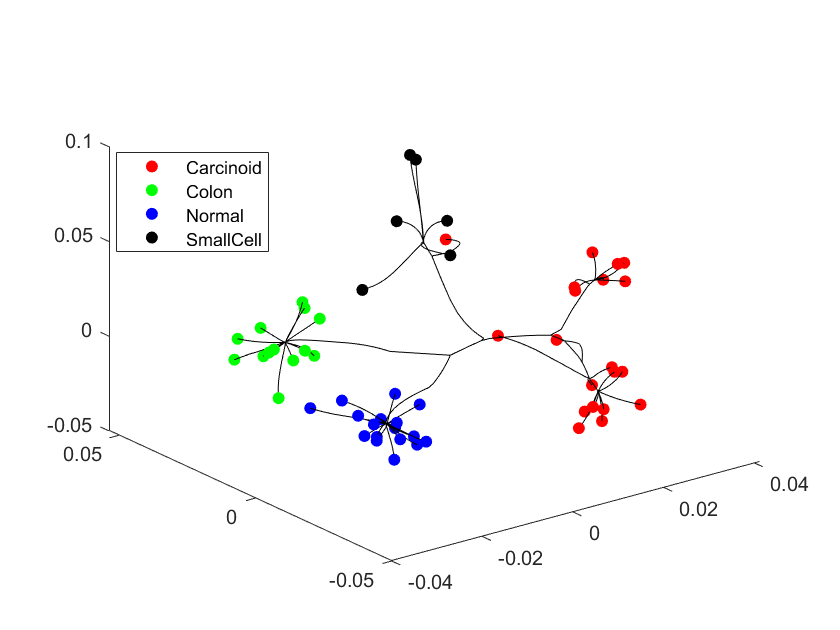}}

\subfloat[RPCCM ($m=20$)]{\label{fig:lung_m20}\includegraphics[width=0.33\textwidth]{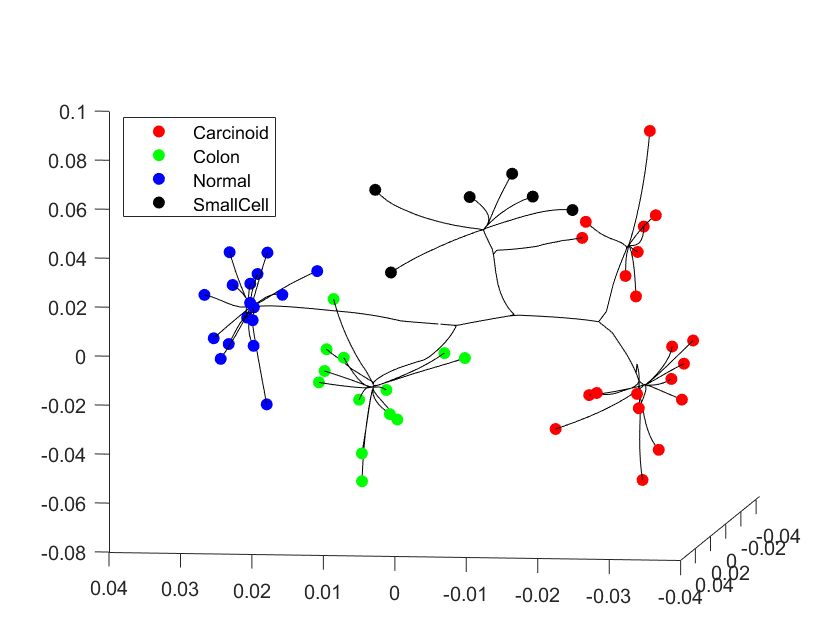}}
\subfloat[RPCCM ($m=10$)]{\label{fig:lung_m10}\includegraphics[width=0.33\textwidth]{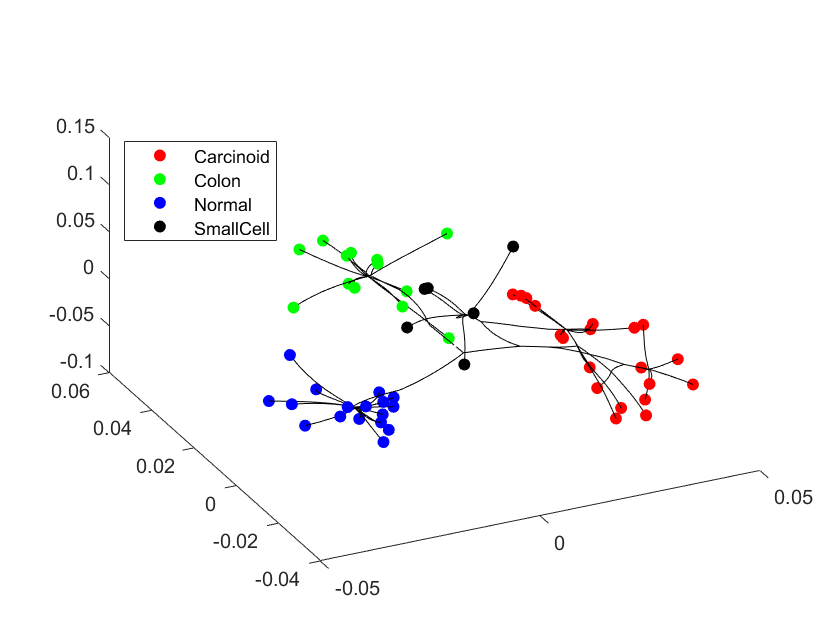}}

\caption{Visualization of the clustering paths.}
\label{Fig:view_lung}
\end{figure}

From the visualizations, we can observe that the convex clustering model (\ref{model: WSON}) performs well on this real data set, where only one data point from the Carcinoid cluster is clustered wrongly. A possible reason is that this wrongly clustered data point is closer to the SmallCell cluster. Moreover, the superior performance of the convex clustering model can be properly preserved by the model (\ref{model: PSON}), even for a very low embedding dimension. More detailed numbers can be found in Table \ref{tab: clustering performance on lung data}.

We also compare the clustering performance of the model (\ref{model: PSON}) with the RP K-means model. For the sake of fairness, we will test with the true number of clusters $K=4$. The results are summarized in Table\ref{tab: clustering performance on lung data}. The results show that the performance of the RP K-means model becomes less reliable as $m$ decreases, while the model (\ref{model: PSON}) is robust.

\begin{table}[tbhp]
\caption{Clustering performance of RPCCM and RP 4-means on the lung cancer data. Here, accuracy means the ratio of correctly clustered data points, and $\gamma^*$ is some value of $\gamma$ corresponding to the best clustering results of RPCCM.}
\label{tab: clustering performance on lung data}
{\footnotesize
\begin{center}
\begin{tabular}{|c|c|c|c|c|} \hline
 \bf{Clustering model} & \bf {accuracy} & \bf {rand index} & \bf {adjusted rand index} & \bf $\gamma^*$
 \\ \hline

RPCCM ($d=12625$) &55/56 & 0.9838 & 0.9586  & $76$
 \\ \hline
RPCCM ($m=500$) &55/56 & 0.9838 & 0.9586  & $76$
 \\ \hline
RPCCM ($m=100$) &55/56 & 0.9838 & 0.9586  & $76$
 \\ \hline
RPCCM ($m=20$) &55/56 & 0.9838 & 0.9586  & $76$
 \\ \hline
RPCCM ($m=10$) &55/56 & 0.9838 & 0.9586  & $96$
 \\ \hline
4-means ($d=12625$) &55/56 & 0.9838 & 0.9586  & /
 \\ \hline
RP 4-means ($m=500$) &55/56 & 0.9838 & 0.9586  & /
 \\ \hline
RP 4-means ($m=100$)&54/56 & 0.9701 & 0.9245 & /
 \\ \hline
RP 4-means ($m=20$) &48/56 & 0.9000 & 0.7421 & /
 \\ \hline
RP 4-means ($m=10$) &43/56 &  0.8753 & 0.6795  & /
 \\ \hline
\end{tabular}
\end{center}
}
\end{table}

\section{Conclusion and Future Works}
\label{sec: conclusion}
In this paper, we proposed a randomly projected convex clustering model for clustering high dimensional data. We proved that, under some mild conditions, the perfect recovery of the cluster membership assignments of the convex clustering model on the original data, if exists, can be preserved by the randomly projected convex clustering model with a much smaller embedding dimension. In particular, we proved that the embedding dimension can be $m = O(\epsilon^{-2}\log(n))$, where $n$ is the number of data points and $0 < \epsilon < 1$ is some given tolerance. We further proved that the embedding dimension can be $m = O(\epsilon^{-2}\log K)$, where $K$ is the number of hidden clusters, which is independent of the number of data points. Extensive numerical experiment results were presented in this paper to demonstrate the robustness and superior performance of the randomly projected convex clustering model. The numerical results presented in this paper also demonstrated that the randomly projected convex clustering model can outperform the randomly projected K-means model in practice.

It is worthwhile pointing out that the practical performance of the convex clustering model and the randomly projected convex clustering model depends on the quality of the input data features. We regard it as a future research direction to investigate a new technique that can do dimension reduction and feature representation learning simultaneously.

\acks{The research of Yancheng Yuan is supported in part by The Hong Kong Polytechnic University under  grant P0038284. The research of Defeng Sun is supported in part by the Hong Kong Research Grant Council under grant 15304721.}


\end{document}